\def\eqref#1{equation~\ref{#1}}
\def\1{\bm{1}}
\def\eps{{\epsilon}}
\DeclareMathAlphabet{\mathsfit}{\encodingdefault}{\sfdefault}{m}{sl}
\SetMathAlphabet{\mathsfit}{bold}{\encodingdefault}{\sfdefault}{bx}{n}
   \definecolor{darkblue}{rgb}{0.0,0.0,0.75}
\DeclareMathOperator{\bfx}{{\bf x}}
\DeclareMathOperator{\bfz}{{\bf z}}
\DeclareMathOperator{\bfw}{{\bf w}}
\DeclareMathOperator{\calD}{{\mathcal D}}
\DeclareMathOperator{\calX}{{\mathcal X}}
\DeclareMathOperator{\calS}{{\mathcal S}}
\DeclareMathOperator{\calY}{{\mathcal Y}}
\DeclareMathOperator{\calH}{{\mathcal H}}
\newtheorem{theorem}{Theorem}
\newtheorem{definition}{Definition}
\newtheorem{claim}{Claim}
\newtheorem{observation}{Observation}
\newtheorem{proposition}{Proposition}
\newcommand{\ke}[1]{\textcolor{red}{ke: #1}}
\def\bbE{\mathop{\mathbb{E}}}
\def\calX{\mathcal{X}}
\def\calH{\mathcal{H}}
\def\calT{\mathcal{T}}
\def\calD{\mathcal{D}}
\def\calN{\mathcal{N}}
\def\calR{\mathcal{R}}
\def\bbE{\mathop{\mathbb{E}}}
\newcommand{\nap}{\textit{normalize-and-predict}\xspace}
\newcommand{\nandp}{\textit{N{\scriptsize \&}P}\xspace}
\newcommand{\mypara}[1]{\textbf{\textit{#1~}}}
\newcommand{\ma}[1]{\textbf{\color{orange}\emph{M: #1}}}
\newcommand{\yz}[1]{\textbf{\color{blue}\emph{Y: #1}}}
\newcommand{\effnorm}{\textproc{FastNorm}\xspace}
\title{Robust and Accurate Authorship Attribution via Program Normalization}
\author[1]{Yizhen Wang}
\author[2]{Mohannad Alhanahnah}
\author[1]{Ke Wang}
\author[1]{Mihai Christodorescu}
\author[2]{Somesh Jha}
\affil[1]{Visa Research}
\affil[2]{University of Wisconsin-Madison}
\affil[ ]{\{yizhewan, kewang, mihai.christodorescu\}@visa.com}
\affil[ ]{alhanahnah@wisc.edu,\ \ jha@cs.wisc.edu}
\begin{document}

\maketitle

\begin{abstract}
\ \ \ \ \ \ Source code attribution approaches have achieved remarkable accuracy thanks to the rapid advances in deep learning. However, recent studies shed light on 
their vulnerability to adversarial attacks. In particular, they can be easily deceived by adversaries who attempt to either create a forgery of another author or to mask the original author. To address these emerging issues, we formulate this security challenge into a general threat model, the \emph{relational adversary}, that allows an arbitrary number of the semantics-preserving transformations to be applied to an input in any problem space. Our theoretical investigation shows the conditions for robustness and the trade-off between robustness and accuracy in depth. Motivated by these insights, we present a novel learning framework, \nap (\nandp), that in theory guarantees the robustness of any authorship-attribution approach. We conduct an extensive evaluation of \nandp in defending two of the latest authorship-attribution approaches against state-of-the-art attack methods.
Our evaluation demonstrates that \nandp improves the accuracy on adversarial inputs by as much as 70\% over the vanilla models. More importantly, \nandp also increases robust accuracy to 45\% higher than adversarial training while running over 40 times faster. 

\end{abstract}
\section{Introduction}

Source code authorship attribution, the problem of identifying the author of a computer program, has been receiving increased attention from the security community~\cite{DL-CAIS,De-Anonymizing,alsulami2017source,survey-Authorship-Attribution,Frantzeskou2006}. On the one hand, being able to identify authors of source code poses a privacy risk for programmers who wish to remain anonymous. For example, contributors to open-source projects may hide their identity to keep their side activities private, or else programmers who prefer to maintain their anonymity for their participation in projects with political implications. On the other 
hand, source code attribution is useful for software forensics and security analysts, especially for identifying malware authors. 
Although a careful adversary may only leave binaries, others could leave behind scripts or source code in a compromised system for compilation. 

\begin{figure}[t]
\centering
\includegraphics[width=0.5\textwidth]{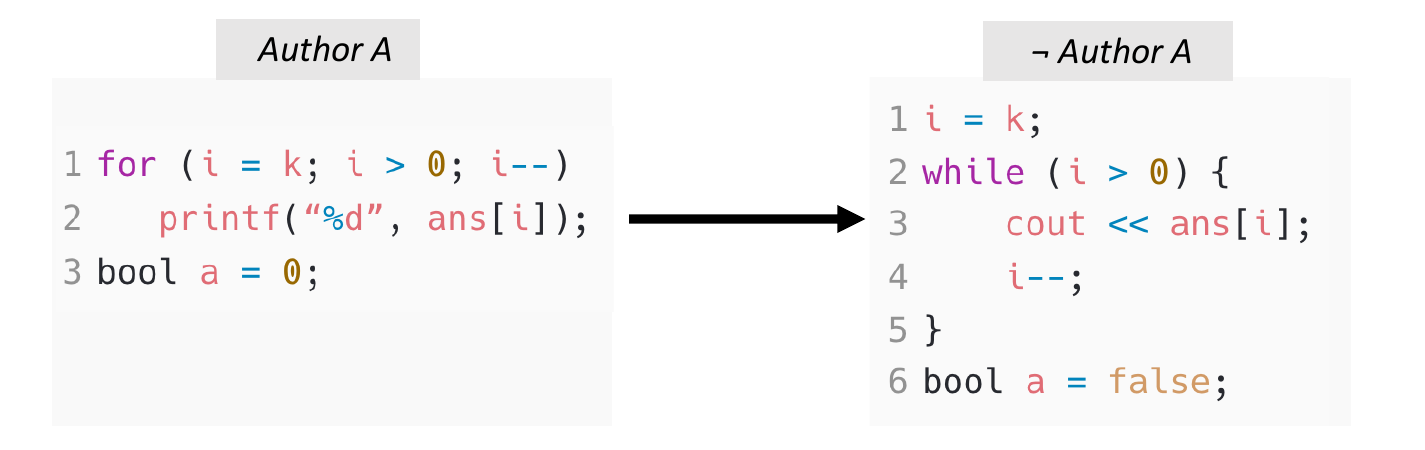}
\caption{An example of source code attribution attack. Three transformations are applied to the initial code written by Author A: (1) \emph{for-to-while} (2) \emph{printf-to-cout}, and (3) \emph{0/1-to-false/true}. The resulted code on the right is attributed to a different author by the ML model.}
\label{fig:motivation}
\end{figure}

While recent advances in deep learning methods have empowered authorship attribution approaches to achieve remarkable accuracy, their robustness against adversarial attacks has been called into question. Simko et al.~\cite{simko2018recognizing} discovered existing authorship attribution approaches are vulnerable to adversarial attempts at creating forgeries. Quiring et al.~\cite{USENIX-authorthisp} presented an attack method against authorship attribution of source code. 
Drawing from adversarial machine learning research, they develop a variant of Monte-Carlo tree search to transform the original code samples in a way that consistently fools the state-of-the-art authorship attribution approaches. 
As an example, \autoref{fig:motivation} illustrates such an attack, which misleads the predication of the attribution model from author $A$ to a different author. This attack applies three source-code transformations. The first changes the
\emph{for}-loop to a \emph{while}-loop; the second replaces the \emph{printf} statement with a \emph{cout} statement; and the last converts the integer representation of Boolean variables to true/false. 

These attacks not only compromise the performance of source code attribution approaches but also raise two general, fundamental questions regarding machine learning models: (1) can a learning framework produce a provably robust model against
transformations that preserve the semantics of the input data (the original source code)? and (2) if so, what is the highest accuracy that this model can achieve?

\paragraph{Key Insights}
This paper presents a novel learning framework, \nandp\footnote{\nandp{} is the abbreviation of \nap.},
that guarantees the robustness of machine learning models against semantics-preserving transformations.
Our insights is to define a unique normal form of each data point with respect to the transformations. Subsequently, 
models are re-trained on the normal form of training data to make predictions on that of test data. Thus, whenever a data point is manipulated at test-time, \nandp automatically reverts it back to its normal form for which models always produce a constant prediction, making models adversarially resistant.
We note this normalization paradigm 
is leveraged in the domain of network intrusion detection~\cite{normalizer-2001}, where a stream of network packets is patched up by a normalizer to eliminate potential ambiguities before the traffic is seen by the monitor, thus removing evasion opportunities. Investigation of adversarial robustness in the problem space (in our case, author attribution) has been emphasized in~\cite{pierazzi2020problemspace}.

Compared to adversarial training, the standard approach to enhancing model robustness, \nandp offers two crucial advantages. First and foremost, adversarial training in principle only achieves high robust accuracy when the adversarial example in the training loop maximizes the loss. However, finding such adversarial examples through logic transformations in the problem space is substantially more expensive than $\ell_p$-norm bounded ones in the feature space. This is because neither the generation nor the sanity check of adversarial examples in the problem space can be directly performed on GPUs in tandem with the update of model parameters. Taking into account that the complexity of adversarial training is proportional to the number of training steps (i.e., the number of adversarial examples to be found), such a computation procedure can be exceedingly inefficient. In contrast, finding the normal form, while also incurring computational overhead, is a one-time process, resulting in a far more efficient algorithm.

Second, existing adversarial training algorithms focus on parametric methods (i.e., neural networks and linear classifiers) with a well-established paradigm: minimizing the training loss of models on adversarial examples~\cite{madry2018towards}. However, non-parametric methods (e.g., nearest neighbor, decision tree, and random forest) have no gradients, which render the existing gradient-based attacks inapplicable. As a result, adversarial training appears to be ineffective for those methods~\cite{yang2019adversarial, wang2018analyzing}.

We instantiate \nandp to the setting of source code authorship attribution attack and extensively evaluate it. In defending the two latest attribution approaches~\cite{rf-caliskan,rnn-abuhamad}
against the attack methods proposed by Quiring et al.~\cite{USENIX-authorthisp},
\nandp achieves significantly improved robust accuracy compared to adversarial training while incurring far less computational overhead.

\paragraph{Contributions} We make the following contributions:

\begin{itemize}
    \item We propose a general threat model, the relational adversary, that stems from logical relations and captures the security challenges facing semantics-preserving input transformations over any input space. (\autoref{sec:relationaladversary})
    \item We analyze the condition for robust prediction and the fundamental robustness-accuracy trade-off in the presence of a relational adversary. (\autoref{sec:nandp})
    \item Inspired by the theoretical insights, we propose \nap, a learning framework that yields provably robust models to relational adversaries. (\autoref{sec:nandp}) We also propose \effnorm, a computationally efficient algorithm that normalizes the training and test inputs in our \nandp pipeline. (\autoref{sec:effnorm}) 
    \item We instantiate the \nandp framework for the authorship attribution tasks in~\cite{USENIX-authorthisp} and implement \effnorm with respect to the attack transformations in~\cite{USENIX-authorthisp}. Our empirical valuation in \autoref{sec:exp} shows that models obtained from \nandp have significantly improved accuracy under attacks. The test accuracy on the adversarial inputs is improved by >70\% over the vanilla model and >45\% over adversarial training for the attack in~\cite{USENIX-authorthisp}. (\autoref{table:acc})  The training time overhead is also substantially shorter in \nandp (<12 hrs) than adversarial training (>20 days). (\autoref{table:time})
\end{itemize}

The remainder of this paper is structured as follows. We begin by introducing the threat models for both the general relational adversary and the source code attribution attacks in \autoref{sec:threatmodel}. We present our \nandp framework in \autoref{sec:nandp} and compare our approach with adversarial training in \autoref{sec:comparison} followed by a detailed description for the implementation of a computationally efficient normalizer in \autoref{sec:effnorm}. We proceed to introduce experiments conducted to evaluate our method and discuss the insights they provide in \autoref{sec:exp}. \autoref{sec:related_work} presents related work, and \autoref{sec:conclusion} concludes.

\section{Threat Model} \label{sec:threatmodel}
In this section, we introduce the threat models for our theoretical study and empirical evaluation, respectively. We start with the definition of adversarial attack for general machine learning tasks as well as the notion of robust accuracy -- the key performance metric to any robust learning task. 
Next, we describe the source code attribution attack proposed by Quiring et al.~\cite{USENIX-authorthisp} against authorship attribution ML models. This attack will be the main setting of our empirical evaluation.
Last, we abstract the authorship evasion attack to a general notion of \emph{relational adversary}. We investigate the condition for robustness against this general attack and use the insights to build robust ML models against source code attribution.

\subsection{Test-time Adversarial Attack}
A typical machine learning classifier is a function $f:\calX\to\calY$ that takes an input $\bfx \in \calX$ with ground-truth label $y\in \calY$ and returns a predicted class label $f(\bfx)\in \calY$. 
A test-time adversary replaces a clean test input $\bfx$ with an adversarially manipulated input $A(\bfx)$, where $A(\cdot)$ represents the attack algorithm. We consider an adversary who wants to maximize the classification error rate over the data distribution $\calD$:
$$
\bbE_{(\bfx,y)\sim \calD} \mathds{1}(f(A(\bfx))\neq y).
$$
Let $\calS(\bfx)$ denote the feasible set from which the attacker can choose the adversarial example (AE), i.e., $A(\bfx) \in \calS(\bfx), \forall A(\cdot), \bfx$.
\begin{definition}[Robustness and robust accuracy]
A classifier $f$ is robust at an input $\bfx$ \emph{iff} $\bfx' \in \calS(\bfx) \implies f(\bfx') = f(\bfx)$. Furthermore, $f$ is robustly accurate at $\bfx$ \emph{iff} $\bfx' \in \calS(\bfx) \implies f(\bfx) = f(\bfx') = y$. The robust accuracy of $f$ over a distribution $\calD$ is
$$
\mathbb{E}_{(\bfx, y)\sim \calD}\mathds{1}(\mbox{$f$ is robustly accurate at $\bfx$})
$$
\end{definition}

\subsection{Source Code Attribution Attack}
In the source code attribution attack, the victim model takes in a piece of source code as the input from a domain $\calX$ and attributes the author of code as the output. The adversary is equipped with a set of semantics-preserving transformations $\calT$. Each element $T\in\calT$ is a function $\calX\to\calX$ that maps a piece of code to a semantics-preserving variant. 
During test time, the adversary can manipulate the test input by applying any sequence of transformations in $\calT$. The ultimate goal of the attack is fooling the source code authorship identification models to predict a wrong author for the test input. As an exemplar, the attacks proposed in~\cite{USENIX-authorthisp,SCAD-TIFS} performs a spectrum of source code transformations: API usage, variable declaration, I/O style, and control flow. We note that the transformations are all in the \emph{problem space}, and the transformed programs still compile and run with the same behavior. 
An attack is successful if the model attributes the transformed code to a different author than the original. 

In terms of the adversary's knowledge over the learning pipeline, the attacks in~\cite{USENIX-authorthisp, SCAD-TIFS} are black-box. The attacker does not know the model structure or parameters, but can query the model prediction for any legitimate test inputs. This querying ability allows the attacker to run a Monte-Carlo tree search to transform the test input until misclassification occurs.

As the learner and defender of the authorship attribution task, our goal is to learn a robust and accurate authorship attribution model that can still correctly classify the adversarially manipulated input as much as we can.
In this paper, we evaluate the model performance by its robust accuracy. 
A model is robustly accurate for a code piece if it can still correctly identify the original user after adversarial transformations.
In our empirical evaluation, we consider the same black-box attack strategy in~\cite{USENIX-authorthisp}. However, our theoretical results also hold for the stronger white-box attack setting in which the attacker has full knowledge of the learning pipeline.

\subsection{Relational Adversary}
\label{sec:relationaladversary}
As a natural generalization to source code attribution, we extend the threat model to any problem spaces subject to any given set of transformations. We formulate such security challenge as a \emph{relational adversary} as follows.

\noindent \mypara{Logical Relation.}
A relation $\mathcal{R}$ is a set of input pairs, where each pair $(\bfx,\bfz)$ specifies a transformation of input $\bfx$ to output $\bfz$. 
We write $\bfx \rightarrow_{\mathcal{R}} \bfz$ iff
$(\bfx,\bfz) \in \mathcal{R}$. We write $\bfx
\rightarrow_{\mathcal{R}}^* \bfz$ iff $\bfx = \bfz$ or there exists
$\bfz_0,\bfz_1,\cdots,\bfz_k$ ($k > 0$) such that $\bfx = \bfz_0$,
$\bfz_i \rightarrow_{\mathcal{R}} \bfz_{i+1}$ ($0 \leq i < k$) and
$\bfz_k = \bfz$. In other words, $\rightarrow_{\mathcal{R}}^*$ is the
reflexive-transitive closure of $\rightarrow_{\mathcal{R}}$.

\begin{definition}[relational adversary] An adversary is said to be $\calR$-relational if $\calS(\bfx) = \{\bfz|\bfx\to_{\calR}^*\bfz\}$, i.e. each element in $\calR$ represents an admissible transformation, and the adversary can apply arbitrary number of transformation specified by $\calR$.   
\end{definition}

In the source code attribution case, a pair of code pieces $(\bfx, \bfz)$ is in the relation $\calR$ if and only if $T(\bfx) = \bfz$ for some transformation $T\in \calT$.
\section{\nandp\ -- A Provably Robust Learning Framework}
\label{sec:nandp}

\begin{figure*}[h!]
\centering
\includegraphics[width=0.8\textwidth]{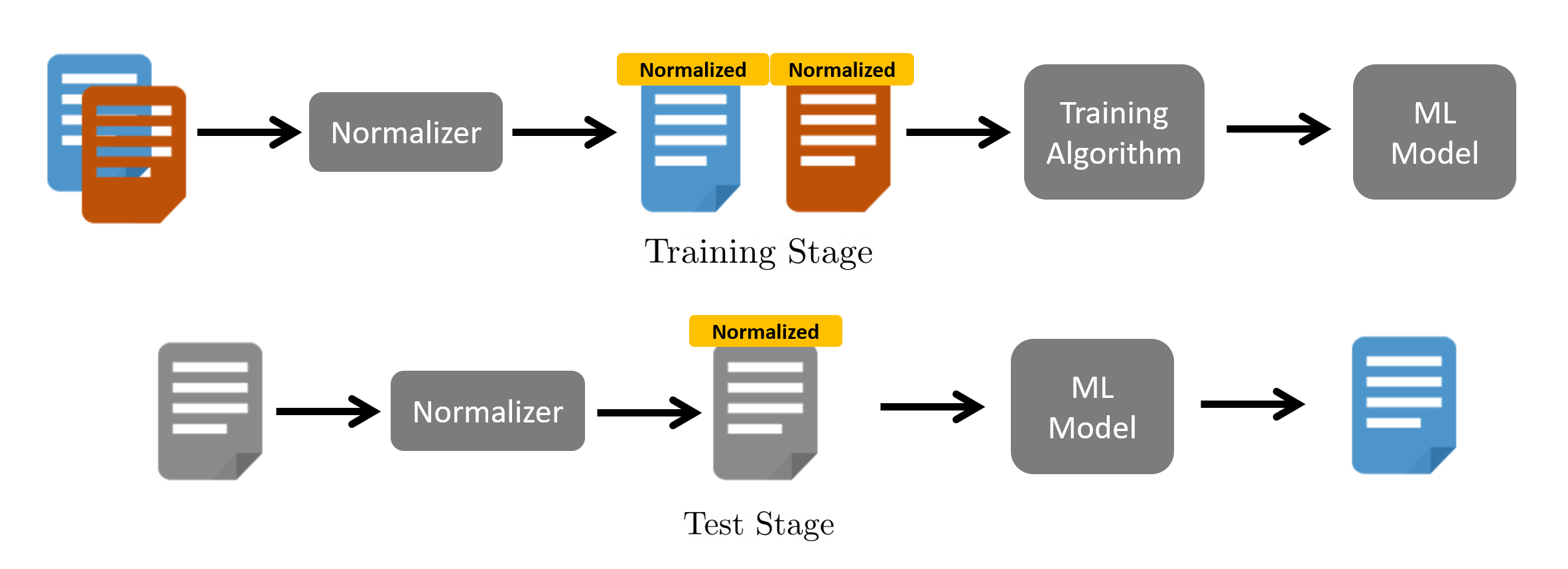}
\caption{Schematic illustration of the \nandp pipeline. Files in blue and red colors belong to different classes, e.g. benign and malicious. A file is grey in color if its class label is unknown. During the training time, the learner collects training inputs with labels, passes the inputs to the normalizer $\calN$, and uses the normalized training inputs to learn an ML model. At test time, the user encounters an input with unknown label, passes the input to $\calN$ again and uses the ML model to predict the label of the normalized test input.} 
\label{fig:nap}
\end{figure*}

In this section, we introduce \nandp, a learning framework which learns and predicts over normalized training and test inputs. \nandp works on a simple and intuitive premise: if the original and the adversarial test can be processed into the same form, then a classifier trained and tested on such form will be robust. We first identify the necessary and sufficient condition for robustness, and propose a normalization procedure that makes \nandp provably robust to $\calR$-relational adversaries.
Finally, we analyze the performance of \nandp: since \nandp guarantees robustness, the analysis will focus on robustness-accuracy trade-off and provide an in-depth understanding to causes of such trade-off.  

\begin{table*}
\centering
\caption{Comparison of training objective and test output for standard risk minimization learning scheme, \nandp and adversarial training; $f^*$ is the minimizer of the training objective.}
\renewcommand{\arraystretch}{1.5}
\begin{tabular}{p{1.2cm}c@{\qquad}c@{\qquad}c} 
\toprule
& No Defense & Normalize-and-Predict & Adversarial Training \\
\midrule
Train 
& $\min\limits_{f} \sum\limits_{(\bfx,y) \in D}\ell(f, \bfx, y)$
& $\min\limits_{f} \sum\limits_{(\bfx,y) \in D}\ell(f, \calN(\bfx), y)$ 
& $\min\limits_{f}\max\limits_{A(\cdot)} \sum\limits_{(\bfx,y) \in D}\ell(f, A(\bfx), y)$ \\
Test 
& $f^*(x)$
& $f^*(\calN(x))$
& $f^*(x)$\\
\bottomrule
\end{tabular}
\label{table:comp}
\end{table*}

\subsection{An Overview of the \nandp Framework}
Figure~\ref{fig:nap} illustrates the \nandp learning pipeline. The learner first specifies a normalizer $\calN:\calX \to \calX$.
We call $\calN(\bfx)$ the `normal form' of input $\bfx$.
The learner then both trains the classifier and predicts the test label over the normal forms instead of the original inputs. Let $D$ denote the training set. In the empirical risk minimization learning scheme, the learner will now solve the following problem
\begin{equation}
\min_{f} \sum_{(\bfx,y)\in D}\ell(f, \calN(\bfx), y),
\end{equation}
and use the minimizer $f^*$ as the classifier.
During test time, the model will predict $f^*(\calN(\bfx))$. Table~\ref{table:comp} compares the \nandp learning pipeline to the normal risk minimization.

\subsection{Finding the Normalizer}
\label{sec:master-normalizer}
The normalizer $\calN$ is crucial to achieving robustness: intuitively, if $\bfx$ and its adversarial example $\bfx_{adv}$ share the same normal form, then the prediction will be robust. Meanwhile, a constant $\calN$ is robust, but has no utility as $f(\calN(\cdot))$ is also constant. Therefore, we seek an $\calN$ that performs only the necessary normalization for robustness with a minimal impact on accuracy. 

In order to study the fundamental requirement of robust classification and the theoretical limit of robust accuracy, we first construct the {\bf relational graph} $G_{\calR}=\{V,E\}$ of $\calR$: the vertex set $V$ contains all elements in $\calX$; the edge set $E$ contains an edge $(\bfx, \bfz)$ \emph{iff} $(\bfx, \bfz) \in \calR$. Then, a directed path exists from $\bfx$ to $\bfz$ \emph{iff} $\bfx\to_{\calR}^* \bfz$. We derive the following necessary and sufficient condition for robustness under \nandp in \autoref{thm:robustcondition}, and subsequently a normalizer $\calN$ in \autoref{def:normal} that guarantees robustness.
\begin{observation}[Condition for Robustness]
\label{thm:robustcondition}
Let $C_1, \cdots, C_k$ denote the weakly connected components (WCC) in $G_{\calR}$. A classifier $f$ is robust for all $\bfx \in C_i$ \emph{iff} $f(\bfx)$ returns the same label for all $\bfx \in C_i$.
\end{observation}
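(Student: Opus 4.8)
The plan is to prove the biconditional by establishing the two implications separately, relying throughout on the observation recorded just above the statement that $\bfx \rightarrow_{\calR}^* \bfz$ holds \emph{iff} there is a directed path from $\bfx$ to $\bfz$ in $G_{\calR}$, and on the unpacked definition of robustness: $f$ is robust at $\bfx$ exactly when $f(\bfx) = f(\bfz)$ for every $\bfz$ reachable from $\bfx$ under $\rightarrow_{\calR}^*$.

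I would handle the reverse implication ($\Leftarrow$) first, as it is the routine one. Assuming $f$ is constant on $C_i$, I fix an arbitrary $\bfx \in C_i$ together with any $\bfz$ satisfying $\bfx \rightarrow_{\calR}^* \bfz$. A directed path from $\bfx$ to $\bfz$ is in particular an undirected path, so $\bfz$ must lie in the same weakly connected component as $\bfx$; hence $\bfz \in C_i$ and $f(\bfz) = f(\bfx)$ by constancy. This yields $Q(\calR, f, \bfx)$ for every $\bfx \in C_i$, which is exactly the desired robustness on the component.

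The forward implication ($\Rightarrow$) is where the real work lies. Assuming $f$ is robust at every vertex of $C_i$, I want to conclude that $f$ is constant on $C_i$. My plan is to pick two arbitrary points $\bfx, \bfz \in C_i$, take an undirected path $\bfx = \bfw_0, \bfw_1, \ldots, \bfw_n = \bfz$ witnessing that they lie in the same component, and argue edge-by-edge that $f(\bfw_j) = f(\bfw_{j+1})$; chaining these equalities then gives $f(\bfx) = f(\bfz)$. For a single edge, either $\bfw_j \rightarrow_{\calR} \bfw_{j+1}$, in which case robustness at $\bfw_j$ forces $f(\bfw_j) = f(\bfw_{j+1})$ via a length-one directed path, or $\bfw_{j+1} \rightarrow_{\calR} \bfw_j$, in which case robustness at $\bfw_{j+1}$ forces the same equality.

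I expect the main obstacle to be the mismatch between the two notions of connectivity in play: robustness is defined via the \emph{directed} reachability $\rightarrow_{\calR}^*$, whereas the components $C_i$ are only \emph{weakly} connected, i.e. defined through undirected reachability. Because an edge along an undirected path may point against the direction of travel, robustness at the path's starting vertex alone cannot push the label across such an edge. The key idea that unlocks the argument --- and the reason the hypothesis must quantify over all $\bfx \in C_i$ rather than a single source --- is to invoke robustness at whichever endpoint is the tail of the arc, so that each edge is crossed in its forward direction. Once this per-edge step is secured, the remaining chaining of equalities is purely mechanical, and the biconditional follows.
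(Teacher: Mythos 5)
Your proof is correct and takes essentially the same route as the paper: the reverse direction rests on the maximality of the weakly connected component exactly as the paper's ``if'' direction does, and your edge-by-edge chaining along an undirected path, invoking robustness at whichever endpoint is the tail of each arc, is just the direct (rather than contrapositive) phrasing of the paper's ``only if'' argument. No gaps.
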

\begin{definition}[Choice of Normalizer with Robustness Guarantee]
\label{def:normal}
Let $\calN$ be a function that maps an input $\bfx\in C_i$ to any deterministic element in $C_i$. Then $f(\calN(\cdot))$ is robust to $\calR$-relational adversaries.
\end{definition}  

\subsection{Robustness-Accuracy Trade-off}
\label{subsec:trade-off}
\mypara{Optimal Accuracy under \nandp.} 
Following the convention in machine learning, we use $\mu(\bfx)$ to denote the probability mass of input $\bfx$ drawn from the distribution, and use $\eta(\bfx,l)\,$$=$$\,\Pr(y\!\!\!=\!\!\!l|\bfx)$ to denote the probability that $\bfx$ has label $l$. \footnote{The label of an input $\bfx$ may also be probabilistic in nature.  For example, a ransomware and a zip tool may have the same static feature vector $\bfx$. The label of a randomly drawn $\bfx$ is probabilistic, and the probability depends on the frequency that each software appears.} 
Then the optimal robust accuracy using \nandp, denoted by $Acc_{\calR}^*$, is
$$
\sum_{C_i} \max_{l\in\calY}\sum_{\bfx\in C_i}\mu(\bfx)\eta(\bfx,l),
$$
which happens when
$$f(\calN(\bfx)) = \arg\max_{l\in\calY} \sum_{\bfx\in C_i}\mu(\bfx)\eta(\bfx,l)$$ for $\bfx\in C_i$. 
Intuitively, $f$ shall assign the most likely label of random samples in $C_i$ to all $\bfx\in C_i$. 

\mypara{Price of Robustness.} In \nandp, the optimal robust accuracy depends on $\calR$. We then observe the following robustness-accuracy trade-off: as the complexity of $\calR$ increases, we may lose accuracy for forcing models to make invariant predictions, and the accuracy loss is the price of robustness. 
\begin{observation}[Robustness-accuracy trade-off]
\label{thm:tradeoff}
Let $\calR'$ and $\calR$ be two relations s.t. $\calR' = \calR\bigcup\{(\bfx,\bfz)\}$, i.e. $\calR'$ allows an extra transformation from $\bfx$ to $\bfz$ than $\calR$. Let $C_{\bfx, \calR}$ denote the WCC in $G_{\calR}$ that contains $\bfx$, and $l_{C}$ be the most likely label of inputs in a WCC $C$.
Then $Acc^*_{\calR'} - Acc^*_{\calR} \leq 0$ for all $\calR, \calR'$ pairs, and the equality only holds when $l_{C_{\bfx,\calR}} = l_{C_{\bfz,\calR}}$.
\end{observation}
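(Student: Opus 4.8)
The plan is to reduce the difference $Acc^*_{\calR'}-Acc^*_{\calR}$ to a purely local comparison between two weakly connected components, and then invoke subadditivity of the maximum. Throughout, write $S_C(l)=\sum_{\bfw\in C}\mu(\bfw)\eta(\bfw,l)$ for a WCC $C$, so that the optimal robust accuracy from the previous paragraph reads $Acc^*_{\calR}=\sum_{C_i}\max_{l\in\calY}S_{C_i}(l)$.

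First I would analyze how adding the single edge $(\bfx,\bfz)$ alters the WCC structure. Because weak connectivity ignores edge orientation, exactly one of two things can happen. If $\bfx$ and $\bfz$ already lie in the same WCC of $G_{\calR}$, then $G_{\calR'}$ has the identical partition into components, so $Acc^*_{\calR'}=Acc^*_{\calR}$ and the difference is $0$; here $C_{\bfx,\calR}=C_{\bfz,\calR}$, so the equality condition $l_{C_{\bfx,\calR}}=l_{C_{\bfz,\calR}}$ holds trivially. Otherwise $\bfx$ and $\bfz$ lie in distinct components $A:=C_{\bfx,\calR}$ and $B:=C_{\bfz,\calR}$, and the new edge merges precisely these two into a single component $A\cup B$, leaving every other component untouched.

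In the merging case all terms of $Acc^*$ except those for $A$ and $B$ cancel, and since $S_{A\cup B}(l)=S_A(l)+S_B(l)$ we obtain
\begin{equation}
Acc^*_{\calR'}-Acc^*_{\calR}=\max_{l}\bigl(S_A(l)+S_B(l)\bigr)-\max_{l}S_A(l)-\max_{l}S_B(l).
\end{equation}
The bound $\leq 0$ is then immediate from subadditivity of the maximum: for every $l$ we have $S_A(l)+S_B(l)\leq\max_{l'}S_A(l')+\max_{l''}S_B(l'')$, and taking the maximum over $l$ on the left preserves this inequality.

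The delicate part is the equality characterization. I would show that equality holds \emph{iff} $S_A$ and $S_B$ admit a common maximizer. For the easy direction, if some $l^*$ maximizes both, then the left-hand side equals $S_A(l^*)+S_B(l^*)=\max_l S_A(l)+\max_l S_B(l)$. Conversely, letting $l^*\in\argmax_l\bigl(S_A(l)+S_B(l)\bigr)$, equality forces both $S_A(l^*)=\max_l S_A(l)$ and $S_B(l^*)=\max_l S_B(l)$, since each summand is individually dominated by its own maximum. As $l_{C_{\bfx,\calR}}$ and $l_{C_{\bfz,\calR}}$ are by definition the maximizers of $S_A$ and $S_B$, ``admits a common maximizer'' is exactly the stated condition $l_{C_{\bfx,\calR}}=l_{C_{\bfz,\calR}}$. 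The one subtlety to flag is ties: if the most-likely label is not unique, the equivalence should be read as ``there is a choice of most-likely labels of the two components that coincide,'' and I would state this uniqueness caveat explicitly so the ``only holds when'' phrasing is unambiguous.
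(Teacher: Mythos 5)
Your proof is correct and follows essentially the same route as the paper's: both reduce the change in optimal robust accuracy to the two components merged by the new edge (all other WCCs being untouched), and both compare $\max_l\bigl(S_A(l)+S_B(l)\bigr)$ against $\max_l S_A(l)+\max_l S_B(l)$, with your subadditivity-plus-common-maximizer argument being a slightly crisper formalization of the paper's label-based case analysis. Your closing caveat about non-unique most-likely labels is also exactly the convention the paper adopts in a remark after its proof, so nothing is missing.
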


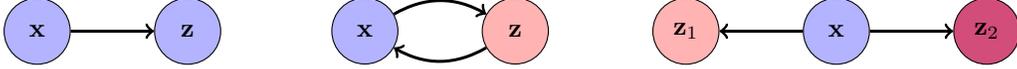
\begin{figure*}
\centering
\begin{tikzpicture}[node distance=2cm]
\node[state, fill=blue!30] (A) {$\bfx$};
\node[state, fill=blue!30] (B) [right of = A] {$\bfz$};
\path[->, line width=0.4mm] (A) edge node {}(B);
\end{tikzpicture}
\hspace{1.2cm}
\raisebox{-0.2em}{
\begin{tikzpicture}[node distance=2cm]
\node[state, fill=blue!30] (A) {$\bfx$};
\node[state, fill=red!30] (B) [right of = A] {$\bfz$};
\path[->, line width=0.4mm] (A) edge [bend left]node {}(B);
\path[->, line width=0.4mm] (B) edge [bend left]node {}(A);
\end{tikzpicture}}
\hspace{1.2cm}
\begin{tikzpicture}[node distance=2cm]
\node[state, fill=blue!30] (A) {$\bfx$};
\node[state, fill=red!30] (B) [left of = A] {$\bfz_1$};
\node[state, fill=purple!70] (C) [right of = A] {$\bfz_2$};
\path[->, line width=0.4mm] (A) edge node {}(B);
\path[->, line width=0.4mm] (A) edge node {}(C);
\end{tikzpicture}
\caption{Relations with different robustness-accuracy trade-off. Different node colors indicate different most likely labels. Appendix~\ref{subsec:labels} gives a detailed explanation on why semantics-preserving transformations can still change the labels of data. {\bf Left:} \nandp preserves natural accuracy; {\bf Middle:} \nandp preserves robust accuracy; {\bf Right:} \nandp causes suboptimal robust accuracy: suppose $\mu(\bfx)=0.02, \mu(\bfz_1)\!=\!\mu(\bfz_2)\!=\!0.49$, and $\eta$ is deterministic. \nandp predict the same label and thus has accuracy at most $0.49$, while the highest robust accuracy is $0.98$ by predicting the true label for $\bfz_1$ and $\bfz_2$.}
\label{fig:tradeoff}
\end{figure*}

The intuition is that the extra edge on the relation graph may join two connected components which are otherwise separate. As a result, models under \nandp will predict the same label for the two components, thus their accuracy on one component will drop if two components have different labels.

We further characterize three different levels of trade-offs (\autoref{fig:tradeoff}).
First, if two inputs $\bfx,\bfz$ have the same most likely label on $\calD$, then the optimal accuracy under \nandp is the same as before normalization, in other words, robustness is obtained \emph{for free}.
Second, if both $(\bfx,\bfz)$ and $(\bfz,\bfx)$ are in $\calR$ but $\bfx,\bfz$ have different most likely labels, then the model with the highest natural accuracy, which predicts the most likely label of $\bfx$ and $\bfz$ respectively, does not have any robustness. In contrast, \nandp achieves the optimal robust accuracy by predicting a \emph{single} label --- the most likely label of samples in $\{\bfx,\bfz\}$ --- for both $\bfx$ and $\bfz$.
Third, if $\bfx$ can only be one-way transformed to two inputs $\bfz_1, \bfz_2$ with different most likely labels, then \nandp may have suboptimal robust accuracy. An absolutely robust classifier need to predict the same label for $\bfx,\bfz_1$ and $\bfz_2$, while the classifier with the highest robust accuracy should predict the mostly likely labels for $\bfz_1$ and $\bfz_2$ if $\bfz_1,\bfz_2$ appear more frequently than $\bfx$.

\section{Comparing \nandp with Adversarial Training}
\label{sec:comparison}

Adversarial training is arguably the most widely acknowledged defense mechanism to adversarial attack. In principle, adversarial training tries to find the model that optimizes the min-max objective shown in \autoref{table:comp}. Such a model will have the highest robust accuracy by definition. However, for relational adversaries especially in the domain of program analysis and system security, the standard adversarial training suffers from its limitation in practice. In this section, we will show two major pain points of adversarial training and explain how \nandp can avoid or alleviate them. In addition, we will show a curious case in which the objectives of the two approaches align with each other, and also discuss a combination of the two approaches for the benefit from both worlds.

\subsection{Limitation of Adversarial Training}
\mypara{Prohibitive Computational Cost.}
In practice, the standard adversarial training procedure replaces the training input with an adversarial example in each training step and update the model with respect to loss over the adversarial example. 
The performance of adversarial training depends on the quality of the adversarial examples in the training loop: a stronger attack that finds adversarial example with higher loss typically leads to better models.
However, we show in \autoref{thm:pspace} in Appendix that the inner maximization problem is in general computationally infeasible for relational adversaries. While fast heuristics for $\ell_p$-norm constrained attacks exist for visual recognition tasks~\cite{shafahi2019adversarial, wong2020fast}, attacks involving code transformations are considerably slower. For example, the Monte-Carlo Tree Search (MCTS) attack in~\cite{USENIX-authorthisp} takes minutes to generate one adversarial example. Since the number of adversarial examples needed is proportionally to the number of model update steps, using adversarial training for code transformation attacks is prohibitively time-consuming.
Furthermore, the computation in the MCTS attack is often CPU-intensive; adding this procedure in the training loop will likely create a bottleneck for the original deep learning pipeline optimized for GPU.

\mypara{Incompatibility with Existing Learning Methods.}
The adversarial training algorithm plays a min-max game between the learner and an assumed attacker, and thus increases the robustness of the model after rounds of updates. However, this approach implicitly assumes the original model training procedure updates the model parameters in an iterative manner. While this is true for deep learning models trained with gradient descent, most of the non-parametric models, e.g. random forests, do not satisfy this assumption. The ML community is aware of this limitation and has come up with defenses for non-parametric models~\cite{yang2019adversarial, wang2018analyzing}. However, these approaches often require computing the collision between classes upon adversarial transformations, which does not scale well for multi-class classification and large data sets.

\subsection{Advantage of \nandp}
\nandp avoids the pain points of adversarial training by design. The normalization procedure is independent to the original training routine. Although normalization may be time-consuming depending on the complexity of the input and relation, it only needs to be performed \emph{once}. Meanwhile, since the normalization is independent to the original learning procedure, the normalizer can readily combine with any existing learning approach. In our empirical evaluation, we successfully accommodate both the random forest and the recurrent neural net approach into our \nandp framework with minimum interruption to the original learning script. We also show in Appendix~\ref{subsec:modelcapcity} that \nandp can reduce the model complexity required for high robust accuracy.

\subsection{Equivalence and Synergy between the Two Approaches}
\nandp and adversarial training have different objectives: the former tries to find an accurate model with provable robustness guarantee, while the latter directly optimizes for robust accuracy. Intriguingly, we find a common condition for program transformation under which the two objectives actually align with each other such that the optimizing these two objectives leads to identical robust accuracy. 

\begin{definition}[Reversible relation]
\label{def:reversiblerelation}
A relation $\calR$ is reversible iff $\bfx\to_{\calR^{*}}\bfz$ implies $\bfz\to_{\calR^{*}}\bfx$ and vice versa.
\end{definition}

\begin{theorem}[Preservation of robust accuracy]
\label{thm:equiv}
Let $f^*$ be the classifier that minimizes the objective of \nandp over the data distribution $\calD$, and let $f^*_{adv}$ minimize the objective of adversarial training over $\calD$.
Then, in principle, $f^*(\calN_{\calR}(\cdot))$ and $f^*_{adv}$ have the same optimal robust accuracy if $\calR$ is reversible.
\end{theorem}

The proof can be found in Appendix~\ref{sec:equivproof}. In essence, \autoref{thm:equiv} is a generalization of the second scenario in \autoref{fig:tradeoff}, in particular, we extend the same principle applied to $(\bfx,\bfz)$ to all possible pairs of inputs in the relational graph induced by $\calR$. Note that the relation between semantics-preserving pairs of programs is reversible by default: if $\bfz$ is $\bfx$'s adversarial example, then $\bfx$ is also naturally an adversarial choice of $\bfz$.

Motivated by the above observations, we propose a natural synergy of the two approaches: for a relation $\calR$ and a reversible subset $\calR'\subseteq \calR$, we can first normalize the input with respect to $\calR'$, and then adversarially train on the normalized inputs w.r.t. the rest of the relation $\calR\backslash \calR'$. Let $\calN_{R'}$ denote the normalizer for $\calR'$. Formally, the learner solves
\begin{equation}
\label{eq:combinedobj}
\min\limits_{f\in\calH}\max\limits_{A(\cdot)}\sum_{(\bfx,y)\in D}\ell\left(f,A\left(\calN_{\calR'}(\bfx)\right), y\right),
\end{equation}
during training to obtain a minimizer $f^*$, and predicts $f^*(\calN_{\calR'}(\bfx))$ at test-time. 
The classifier $f^*$ will be robust to $\calR'$-relational adversary, and the unified framework preserves the optimal robust accuracy as shown in \autoref{thm:equiv}. The adversarial training now uses a simpler relation $\calR\backslash\calR'$ and thus can potentially be more computationally efficient.

Last, we show in Appendix~\ref{subsec:complexity} an interesting connection between the adversarial example and the normal form: the strongest adversarial example satisfies the requirement of \autoref{thm:robustcondition}, and thus can be used as the normal form. Therefore, in theory, \nandp is at least as efficient as the optimal adversarial training. 
In practice, the normalizer we use in our empirical evaluation are all more efficient than adversarial training.

\section{\effnorm\ -- A Practical Normalizer}
\label{sec:effnorm}

The definition of normal form in \autoref{sec:master-normalizer} suggests a master algorithm of normalization: for each test input $\bfx$, we first find the weakly connect component (WCC) that contains the $\bfx$ in the relational graph, and returns a deterministic node in the WCC as the normal form. However, this algorithm is computationally demanding. The time complexity to identify the WCC requires enumeration over all possible variants to $\bfx$.
The size of the WCC can grow exponentially with the number of transformations. 

In response to the computation challenge of finding the exact normal form, we propose \effnorm\ -- a heuristic algorithm returning a code variant that is syntactically close to the exact normal form if not the same. In this section, we 
\begin{enumerate}[leftmargin=*]
    \item
    introduce the algorithmic description of \effnorm,
    \item 
    show how the graph search problem of normal form can be converted to an optimization problem given a similarity metric between code pieces,
    \item
    analyze the time complexity of \effnorm with respect to a special class of transformation named monotonically-decreasing transformation, and finally
    \item
    describe the implementation of our \effnorm\ for the authorship attribution task with actual running normalization examples.
\end{enumerate}

\subsection{Overview of \effnorm}
As shown in Algorithm~\ref{alg:effnorm}, \effnorm\ takes two input arguments: the test input $\bfx$ to be normalized, and a set of transformations $\calT_{\calN}$ that will be applied to $\bfx$ during normalization. Each element $T:\calX\to\calX \in \calT_{\calN}$ is a function that maps a code piece to a semantically-equivalent variant.

In each iteration, \effnorm finds a transformation in $\calT_{\calN}$ applicable to the current version pf code $\bfx_\calN$, and update $\bfx_{\calN}$ to the transformed version. The procedure ends when no transformation rule in $\calT_{\calN}$ can be applied to $\bfx_\calN$, and \effnorm\ returns the final $\bfx_{\calN}$ as the approximate normal form. 

The choice of $\calT_{\calN}$ is the most critical factor to the performance of \effnorm. In particular, our choice of $\calT_{\calN}$ needs to (1) guarantee that the normalization procedure eventually terminates, and (2) transform $\bfx_\calN$ towards the exact normal form. In the following sections, we will show how normalization can be formulated as an optimization problem, and with proper choice of the normal form and $\calT_{\calN}$, \effnorm becomes a heuristic algorithm of solving the optimization problem.
\begin{algorithm}
\caption{\effnorm$\left(\bfx, \calT_{\calN}\right)$}
{\bf input} $\bfx$: the test input; $\calT_{\calN}$: a set of transformations to be applied.
$\bfx_\calN = \bfx$\\
\While{$\exists T\in \calT_{\calN}$ s.t. $T\left(\bfx_\calN\right) \neq \bfx_\calN$}
{
  $\bfx_\calN = T\left(\bfx_\calN\right)$
}
\Return{$\bfx_\calN$}
\label{alg:effnorm}
\end{algorithm}

\subsection{Normalization as an Optimization Problem}

Recall that for a test input $\bfx$, we use $C_{\bfx}$ to denote the weakly connected component (WCC) that contains $\bfx$ in the relational graph and $\calN(\bfx)$ to be the exact normal form of $\bfx$. By \autoref{def:normal}, $\calN(\bfx)$ is also the normal form for all $\bfx'\in C_{\bfx}$.

Now, suppose we have a metric $d: \calX\to\calX \to [0,+\infty)$ that represents the similarity between two inputs $\bfx$ and $\bfx'$, then the challenge of finding the exact normal form can be formulated as the following minimization problem
\begin{equation}
\label{eq:opt1}
    \min_{\bfx'\in C_{\bfx}} d(\bfx', \calN(\bfx)).
\end{equation}

Moreover, suppose we have a function $\lambda: \calX\to\calR_0^+$ that maps each code piece to a unique real score, which essentially creates an order between possible inputs.  
Recall that any deterministic node in $C_{\bfx}$ can be the normal form for inputs in $C_{\bfx}$. We can define $$\calN(\bfx) \triangleq \arg\min_{\bfx'\in C_{\bfx}}\lambda(\bfx'),$$ i.e., the node in $C_{\bfx}$ with the smallest order. Then, the optimization problem in \autoref{eq:opt1} can be simplified to
\begin{equation}
\label{eq:opt2}
    \min_{\bfx'\in C_{\bfx}}\lambda(\bfx')
\end{equation}
by setting the metric $d(\bfx, \bfx') \triangleq |\lambda(\bfx) - \lambda(\bfx')|$.

\subsection{\effnorm with Monotonically Decreasing Transformation}
\label{sec:effnormperf}
The problem formulation in \autoref{eq:opt2} suggests a natural heuristic algorithm in a descent-fashion: we iteratively apply transformations that lower the objective function in \autoref{eq:opt2} until reaching a minimum. 

\begin{definition}[Strictly and strongly monotonically decreasing transformations]
A transformation rule $T:\calX\to\calX$ is {\bf \emph{strictly monotonically decreasing}} w.r.t. an order $\lambda$ \emph{iff} 
$$T(\bfx) \neq \bfx \implies \lambda(T(\bfx))<\lambda(\bfx)$$
for all $\bfx \in \calX$. Moreover, $t$ is called {\bf \emph{$\eps$-strongly decreasing}} \emph{iff} $$T(\bfx) \neq \bfx \implies \lambda(T(\bfx))<\lambda(\bfx)-\eps$$
for some constant $\eps>0$ and for all $\bfx \in \calX$.
\end{definition}

By choosing transformation rules in $\calT_{\calN}$ to be monotonically-decreasing w.r.t. a predefined order, the loop in \effnorm will always reduce the value of $\lambda(\bfx_{\calN})$ before termination, i.e., making $\bfx_{\calN}$ closer to the exact normal form $\calN(\bfx)$. The procedure is also guaranteed to terminate for $\eps$-strongly decreasing transformations because $\lambda$ is lower-bounded.
\begin{theorem}[Time-complexity of \effnorm]
\label{thm:timecomplexity}
Let $s = \lambda(\bfx)$ denote the score of input $\bfx$ under metric $\lambda$. Then for $\calT_{\calN}$ that contains only $\eps$-strongly decreasing transformations, \effnorm terminates within $s/\eps$ iterations.
\end{theorem}

Notice that in order to respect the boundary condition in Equation~\ref{eq:opt2} , $\calT_{\calN}$ should not transform the code outside $C_{\bfx}$. In practice, this boundary condition can be guaranteed by setting $\calT_{\calN}$ to be a subset of the assumed attacker's transformation rules.

\subsection{\effnorm for Source Code Attribution Attack} 
 We now implement \effnorm for the source code attribution attack in~\cite{USENIX-authorthisp}. In this section, we first describe the attack transformers considered by our \effnorm and explain the rationale of our choice. Next, we describe our choice of the order function $\lambda$ and the normalizing transformer set $\calT_{\calN}$. Last, we show a running example of \effnorm in action to illustrate the desired outcome. 
 
\begin{center}
\begin{table*}
\small
\caption{List of code transformations and their corresponding normalized transformation}
\centering
\begin{tabular}{|p{2cm}|p{1.5cm}|p{5.4cm}|p{2.5cm}|p{2.5cm}|}
\hline
\textbf{Transformer}           
& \textbf{Family} 
& \textbf{Transformer Description} 
& \textbf{Syntactic Feature of Interests (SFoI)} 
& \textbf{Normalizing Action in $\calT_\calN$}\\ 
\hline
For statement transformer          
& \multirow{3}{*}{Control}
& Replaces a for-statement by an equivalent while-statement             
& \multirow{2}{*}{Number of for loops}
& \multirow{2}{2.5cm}{Transform for loops to while}
\\ 
\cline{1-1} \cline{3-3}
While statement transformer        
&                                 
& Replaces a while-statement by an equivalent for-statement
&
&                                                                               \\ \cline{1-1} \cline{3-5} 
If statement transformer           
&                                 
& Split the condition of a single if-statement at logical operands (e.g., \&\& or ||) to create a cascade or a sequence of two if-statements depending on the logical operand          
& Number of logical predicate in an if-statement
& Split all if-statement so that each statement only has one logical predicate. \\ \hline
Array transformer                  
& \multirow{7}{*}{Declaration}    
& Converts a static or dynamically allocated array into a C++ vector object
& Number of array that can potentially be converted to C++ vector object
& Convert ALL static or dynamically allocated array into a C++ vector object.   \\ \cline{1-1} \cline{3-5} 
String transformer                 
&                                 
& Array option: Converts a char array (C-style string) into a C++ string object. The transformer adapts all usages in the respective scope, for instance, it replaces all calls to strlen by calling the instance methods size. String option: Converts a C++ string object into a char array (C-style string). The transformer adapts all usages in the respective scope, for instance, it deletes all calls to c str(). 
& Number of char array
& Convert all string to C++ string object.                                      \\ \cline{1-1} \cline{3-5} 
Integral type transformer          
&                                 
& Promotes integral types (char, short, int, long, long long) to the next higher type, e.g., int is replaced by long.        
& Number of integral type declaration lower than long long
& Replace all integral type with long long.                                     \\ \cline{1-1} \cline{3-5} 
Floating-point type transformer    
&                                 
& Converts float to double as next higher type.
& Number of float type declaration lower than double
& Convert all float to double.                                                  \\ \cline{1-1} \cline{3-5} 
Boolean transformer                
&                                 
& Bool option: Converts true or false by an integer representation to exploit the implicit casting. Int option: Converts an integer type into a boolean type if the integer is used as boolean value only
& Number of Boolean values
& Use int representation in all cases                                       \\ \cline{1-1} \cline{3-5} 
Init-Decl transformer              
&                                 
& Move into option: Moves a declaration for a control statement if defined outside into the control statement. For instance, int i; ...; for(i = 0; i \textless N; i++) becomes for(int i = 0; i \textless N; i++). Move out option: Moves the declaration of a control statement’s initialization variable out of the control statement.
& Number of looping variable declared outside the control statement
& Use move-in option for all scenarios.                                        \\ \cline{1-1} \cline{3-5} 
Typedef transformer                
&                                 
& Convert option: Convert a type from source file to a new type via typedef, and adapt all locations where the new type can be used. Delete option: Deletes a type definition (typedef) and replace all usages by the original data type.
& Number of user-defined types
& \multirow{1}{2.5cm}{Apply the delete option to all typedef.}                                       \\ 
\hline
\end{tabular}
\label{table:transformer}
\end{table*}
\end{center}

\begin{center}

\begin{table*}
\small
\caption*{Table 2: Continued}
\centering
\begin{tabular}{|p{2cm}|p{1.5cm}|p{5.4cm}|p{2.5cm}|p{2.5cm}|}
\hline
\textbf{Transformer}           
& \textbf{Family} 
& \textbf{Transformer Description} 
& \textbf{Syntactic Feature of Interests (SFoI)} 
& \textbf{Normalizing Action in $\calT_\calN$}\\ 
\hline
Include-typedef transformer        
& Template                        
& Inserts a type using typedef, and updates all locations where the new type can be used. Defaults are extracted from the 2016 Code Jam Competition.                                
& Number of user-defined types
& \multirow{1}{2.5cm}{Apply the delete option to all typedef.}                      
\\ \hline
Unused code transformer            
& Declaration                     
& Function option: Removes functions that are never called. Variable option: Removes global variables that are never used.              
& \multirow{1}{2.7cm}{Number of unused variables and functions}
& \multirow{1}{2.7cm}{Remove unused variables and functions.}                       \\ \cline{1-3}
\hline
Input interface transformer Output 
& \multirow{5}{*}{API}            
& Stdin option: Instead of reading the input from a file (e.g. by using the API ifstream or freopen), the input to the program is read from stdin directly (e.g. cin or scanf). File option: Instead of reading the input from stdin, the input is retrieved from a file.
& \multirow{2}{*}{Number of file I/O}
& Use stdin always.                         \\ 
\cline{1-1} \cline{3-3}
\cline{5-5}
Output interface transformer       
&                                 
& Stdout option: Instead of printing the output to a file (e.g. by ofstream or freopen), the output is written directly to stdout (e.g. cout or printf). File option: Instead of writing the output directly to stdout, the output is written to a file.
& 
& Use stdout always.                        \\ \cline{1-1} \cline{3-5} 
Input API transformer     
&                                 
& C++-Style option: Substitutes C APIs used for reading input (e.g., scanf) by C++ APIs (e.g., usage of cin). C-Style option: Substitutes C++ APIs used for reading input (e.g., usage of cin) by C APIs (e.g., scanf). & \multirow{2}{2.7cm}{Number of C-style I/O API}
& Use C++ API always                                                            \\ \cline{1-1} \cline{3-3} \cline{5-5} 
Output API transformer             
&
& C++-Style option: Substitutes C APIs used for writing output (e.g., printf) by C++ APIs (e.g., usage of cout). C-Style option: Substitutes C++ APIs used for writing output (e.g., usage cout) by C APIs (e.g., printf). 
&
& Use C++ Style always.                  
\\ \cline{1-1} \cline{3-5} 
Sync-with-stdio transformer        
&                                 
& Enable or remove the synchronization of C++ streams and C streams if possible         
& Number of potential synchronization sites
& Enable all synchronization.         
\\ \hline
Compound statement transformer     
& \multirow{2}{*}{Misc}  
& Insert option: Adds a compound statement (\{...\}). The transformer adds a new compound statement to a control statement (if, while, etc.) given their body is not already wrapped in a compound statement. Delete option: Deletes a compound statement (\{...\}). The transformer deletes compound statements that have no effect, i.e., compound statements containing only a single statement.            
& Number of compound statements
& Apply the delete option to all locations. \\ \cline{1-1} \cline{3-5} 
Return statement transformer       
&                                 
& Adds a return statement. The transformer adds a return statement to the main function to explicitly return 0 (meaning success). Note that main is a non-void function and is required to return an exit code. If the execution reaches the end of main without encountering a return statement, zero is returned implicitly.
& Number of implicit return sites
& Adds the return statement to all applicable locations.
\\ \hline
\end{tabular}
\end{table*}
\end{center}

\subsubsection{Attack Transformations to be Normalized}
The evasion and impersonation attack in \cite{USENIX-authorthisp} uses a total of 42 transformation options. These transformations can be divided into two categories based on the information needed. The \emph{general} transformations, such as changes in control statement, can be applied to any code without prior knowledge of the target author. In contrast, the \emph{template-based} transformations, which aim to mimic the target author's function/variable naming and type-def habits, will require samples of the target author's code pieces. \effnorm considers 35 of the the 42 transformations as listed in \autoref{table:transformer}.~\footnote{We follow the counting method in \cite{USENIX-authorthisp} and count by the number of options instead of number of transformers. For example, the input interface transformer has two options -- stdin and file; therefore, we count the input interface transformer as having two transformation options.} 
The selection covers most transformations including API usage, variable declaration, I/O style and control statement. The transformations are also common across attack papers~\cite{USENIX-authorthisp,SCAD-TIFS}. The only general transformation \emph{not} considered by \effnorm is function in-lining, which removes declared functions in the code and move the commands in-line to the caller function. While this may be done for simple online judge submissions, the boundary of usage of a function is in general hard to track in large code projects. In-lining function in one script may raise problems in other scripts that import and call the function. We do not normalize the transformation as it is hardly semantics-preserving~\cite{inlining}.
The template-based attack transformations \emph{not} considered by our \effnorm is function/variable renaming. The function/variable transformer peeks into the codes written by the target of imitation, and then rename the function and variables to match the target author's habit. This attack action requires extensive knowledge to the target author's coding habit. In the extreme case, the attacker may be more familiar to the target author than the learner, and thus makes the evasion inevitable. We do not normalize this transformation as it can significantly obscure the ground truth.

\begin{figure*}[t]
\centering
\includegraphics[width=0.8\textwidth]{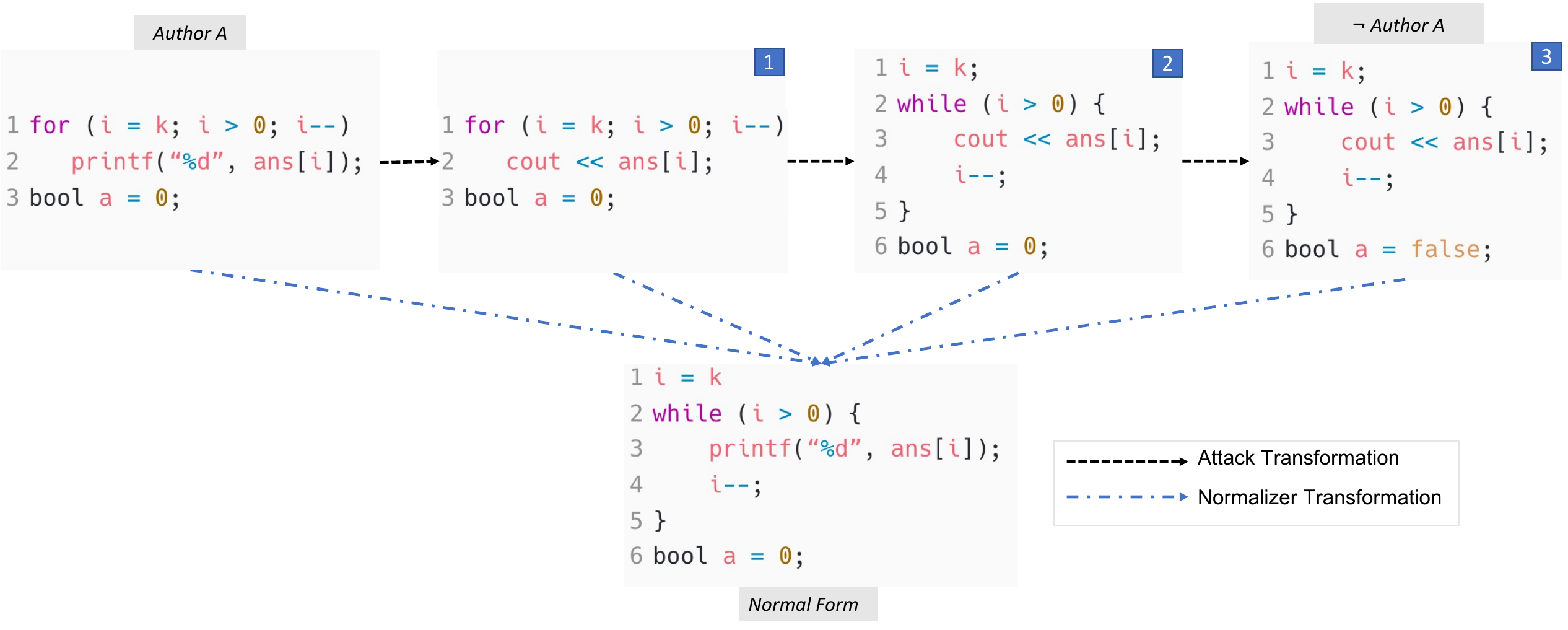}
\caption{Normalizer in action.}

\label{fig:normalizer}
\end{figure*}

\subsubsection{Order Function $\lambda$ \& Normalizing Transformer $\calT_{\calN}$}

Motivated by the theoretical findings of the performance of \effnorm in \autoref{sec:effnormperf}, we identify the Syntactic Features of Interests (SFoI) corresponding to the transformers as shown in \autoref{table:transformer}. Each SFoI's value can be changed by one or multiple transformers.
For example, we use the number of for loops in the code as one SFoI, and its value can be changed by two transformers -- the \emph{for-to-while} and the \emph{while-to-for} transformer over the control flow. 

We take the order function $\lambda$ as the \emph{sum} of values of the SFoIs, and the normal form of a code piece is the variant that has the smallest sum of SFoIs. For example, if the SFoIs are 1) number of \emph{for} loops and 2) number of \emph{printf} statements, then the normal form will be the variant that has the least number of \emph{for} loops and \emph{printf} statements in total. In \autoref{table:transformer}, we identify a total of 16 SFoIs and use the sum of all of them as our order function $\lambda$.

We select a subset of the attack transformations as the set of normalizing transformations $\calT_{\calN}$ so as to respect the boundary condition in \autoref{eq:opt2}. Our $\calT_{\calN}$ contains and only contains the attack transformations that strictly decrease the value of SFoIs. For example, when the number of \emph{for} loops is used as an SFoI, we keep the \emph{for-to-while} transformer in $\calT_{\calN}$ and discard the \emph{while-to-for}. The right-most column of \autoref{table:transformer} shows all the transformations we keep in $\calT_{\calN}$. Notice that the SFoIs all take non-negative integer values, and the normalizing transformations are all $1$-strongly decreasing. Therefore, as a result of \autoref{thm:timecomplexity}, the number of iterations in \effnorm for an input $\bfx$ is bounded by the value of $\lambda(\bfx)$. 

\subsubsection{\effnorm in Action}
\autoref{fig:normalizer} shows an example of \effnorm in action. The left-most box contains an code snippet originally written by Author A. The subsequent code boxes in the top row illustrate a sequence of transformations applied to the original code. The attacker first converts the C-style \emph{printf} statement to the C++ style \emph{cout} statement, then changes the \emph{for} loop to a \emph{while} loop, and eventually changes the value of a boolean variable from 0 to False. The final variant in right-most box is a successful adversarial example that misled the model to predict a different author.

In this code example, three syntactical features of interests are involved: 1) the number of \emph{for} loops, 2) the number of C-style I/O statements and 3) the number of Boolean values that can be cast into integers. \effnorm\ applies the normalizing actions in a iterative manner, reducing the number of \emph{for} loops, C-style IO statements and Boolean values until no more action is applicable. All four code pieces -- the original input, the final adversarial example and the two intermediate variants -- will be normalized into the same normal form as depicted in the bottom box in \autoref{fig:normalizer}.
\section{Evaluation} \label{sec:exp}
Having built the \nandp framework and an efficient normalizer for the transformations in~\cite{USENIX-authorthisp}, we now proceed to evaluate the performance of the \nandp models in the presence of the source code attribution attack. In particular, we investigate the following questions:
\begin{enumerate}
    \item
     Does \nandp improve robust accuracy compared to the unprotected models and the adversarial trained counterpart in practice?
    \item
    To what extent does \effnorm reduce the difference between the original and the adversarial input? 
    \item
    How does the computational overhead of \nandp\ compare to that of adversarial training?
\end{enumerate}
We evaluate the performance of our \nandp\ framework over the same data set and threat model in~\cite{USENIX-authorthisp}. The results corroborates our theoretical findings: \nandp achieves significantly higher robust accuracy compared to the baseline models; it also overcomes the pain-points of adversarial training as it significantly shortens the training time with minimum change to the existing learning pipeline.

\subsection{Dataset}
We use the dataset provided by Quiring et al ~\cite{USENIX-authorthisp}.~\footnote{https://github.com/EQuiw/code-imitator/tree/master/data/dataset\_2017} The data set is collected from Google Code Jam~\cite{googleJAM}, a coding platform on which individual programmers compete to solve coding challenges. It consists of 1,632 files of C++ code from 204 authors solving the same 8 programming challenges of the competition. 
We follow the same train-test data splits in~\cite{USENIX-authorthisp}. We create 8 different data splits. Each split uses the codes from one challenge as the test set and the codes from the rest seven challenges for training. We run the experiments over all 8 splits and report the average results.

\subsection{Implementation and Infrastructure}
We implement source code \textit{normalizer} on top of Clang~\cite{clang}, an open-source C/C++ frontend for the LLVM compiler framework. For fair comparison of running time, we run all experiment series on an Amazon EC2 c5.18xlarge instance with 72 cores and 144GB memory. We train the model using the \emph{sklearn} and \emph{keras} APIs with \emph{tensorflow} backend.

\begin{table*}[h!]
\centering
\large
\caption{Model accuracy(\%) under attack. The leader for each series is highlighted in bold.}
\renewcommand{\arraystretch}{1.25}
\resizebox{0.8\textwidth}{!}{%
\begin{tabular}{l|c|c|c|c|c} 
& \nandp-RF 
& \nandp-LSTM 
& Adv-LSTM
& Vanilla-RF (Calistan et al.~\cite{rf-caliskan})
& Vanilla-LSTM (Abuhamad et al.~\cite{rnn-abuhamad})
\\
\hline
Clean
&76.1 &78.7  &76.7  &\textbf{90.4}  &88.4 \\
Matched 
&72.3 &\textbf{73.7}  &30.8  &13.2  & 21.1 \\
Full
&70.2 &\textbf{71.5} &25.1 &0.8 &0.9 \\
Adaptive
&37.3 &\textbf{49.9} &26.2 &10.2 &21.1\\
\end{tabular}}
\label{table:acc}
\end{table*}

\subsection{Baseline}
\paragraph{Learning Models.} We consider both the random forest (RF) model and the recurrent neural net model with LSTM units attacked in~\cite{USENIX-authorthisp} for baseline performance. Hereinafter, we name them 
{\bf Vanilla-RF} and {\bf Vanilla-LSTM}. Our \nandp\ framework uses the same model structure, parameters and training procedure as Quiring et al.~\cite{USENIX-authorthisp} except that \nandp works with normalized data instead of their original form. Hereinafter, we call the normalized models {\bf \nandp-RF} and {\bf \nandp-LSTM}. For the adversarial training baseline, we train the lone compatible model {\bf Vanilla-LSTM} into {\bf Adv-LSTM}. 
We note that the standard adversarial training, which trains the model from scratch and creates adversarial examples for each gradient descent step with respect to the current model, is too computationally inefficient for the source attribution attack. In existing adversarial training literature, the $\ell_p$-norm based adversarial examples are created directly in the feature-space using gradient ascent; the attack can be readily computed in GPU in a similar manner as model updates.
The code transformations, however, are performed in the problem-space; the MCTS computation is CPU-intensive and thus takes much longer. In addition, the validity check of adversarial examples further increases the computational load. We make the following adjustments to speed up the adversarial training process. 
First, we use \textbf{Vanilla-LSTM} as the initial model and fine-tune it using adversarial inputs. The models show improved robust accuracy (\textasciitilde60\%) to the attacks in the training loops after 10 epochs. Second, instead of generating adversarial examples at every training step, we generate the adversarial training inputs for all .cpp files with respect to the model at beginning of an epoch. This change allows us to generate the adversarial training inputs in parallel. To further speed up the adversarial training procedure, we also reduce the max-depth from 25 to 5 as well as the number of random play-outs at each node from 50 to 10 in the Monte-Carlo tree-search. With these modification, we finally manage to finish adversarial training within a month.

\paragraph{Attack Methods.} We consider three attack modes. All three attacks use the Monte-Carlo tree search method in~\cite{USENIX-authorthisp} with the same parameters including number of iterations and roll-out per iteration. The difference among the three is that the {\bf Matched} attack uses the 35 transformations normalized by our \effnorm; the {\bf Full} attack uses all 42 transformations in~\cite{USENIX-authorthisp}; and the {\bf Adaptive} attack uses the 7 transformations not considered in \effnorm. We assume an adaptive attacker who has full knowledge of our \nandp pipeline. By initializing the input to the normal form and attacking using the above-mentioned 7 transformations, the attacker can generate adversarial examples invariant to the transformations in \effnorm.

\subsection{Accuracy Results and Discussion}

\noindent \mypara{\nandp v.s. Vanilla Models.} \autoref{table:acc} shows the test accuracy of all baselines against adversarial examples and clean inputs. 
\textbf{\nandp-LSTM} has the highest accuracy against all three attacks followed by \textbf{\nandp-RF}. Compared to the corresponding vanilla models with the same model structures, the \nandp models achieves higher accuracy by a wide margin. The accuracy increases by more than 50\% for the matched attack and by around 70\% for the full attack. The results show that \nandp is highly effective when the attacker uses transformations already considered by the normalizer. 

Intriguingly, even under the adaptive attack in which the adversarial example remains the same after normalization, \nandp still achieves nontrivial accuracy --- 37.3\% for \textbf{\nandp-RF} and 49.9\% for \textbf{\nandp-LSTM}, 27\%/28\% higher than {\bf Vanilla-RF}/{\bf Vanilla-LSTM}. By forcing the adaptive attacker to use a smaller set of transformations, \nandp effectively reduces the attack surface and thus still enhances the accuracy. The extent of the improvement depends on the transformations used in the adaptive attack. The improvement over the vanilla model is also partly due to the nature of the adaptive attack: the initial state of the Monte-Carlo tree search is set to the normal form, which potentially has already wiped out some transformations by the normalizer.

For clean inputs without undergoing adversarial transformations, the \nandp models has lower accuracy compared to the vanilla models due to the inevitable robustness-accuracy trade-off analyzed in \autoref{sec:nandp}. However, the difference (<10\% for LSTM and <15\% for RF) is much smaller compared to the accuracy gain in the adversarial setting. The trade-off will be worthwhile when a sizable portion of the inputs come from adversarial sources.

\vspace{0.25\baselineskip}
\noindent \mypara{\nandp v.s. Adversarial Training.} In addition to the improvement over the vanilla models, \nandp models also consistently outperforms the adversarially trained counterparts across all attacks by a significant margin (>40\% for matched, >45\% for full and up to 23\% for adaptive). 

The performance of adversarial training is heavily affected by the strength of the attack used in the training loop: a model adversarially trained with a weak attack in the loop may succumb to a strong attack in test time. Recall that our adversarial training procedure uses an attack with reduced search parameters in order to have reasonable training time. {\bf Adv-LSTM} is able to achieve \textasciitilde 60\% accuracy against the full attack in the training loop. However, the accuracy drops significantly to 25.1\% against the full-strength MCTS attack in the actual test. In security applications, the attacker is often assumed to have more computation power than the defender and may use attack algorithms unknown to the defender in which case \nandp is likely to show more consistent performance. 

\begin{table}[h!]
\centering
\caption{Average $\ell_2$-norm of the original input $\bfx$, $\ell_2$-norm of the adversarial perturbation $\Delta\bfx$ and the perturbation ratio $\|\Delta\bfx\|/\|\bfx\|$ in feature space under full attack.}
\renewcommand{\arraystretch}{1.25}
\resizebox{0.45\textwidth}{!}{%
\begin{tabular}{c|c|c|c|c} 

& \nandp-RF 
& \nandp-LSTM 
& Vanilla-RF
& Vanilla-LSTM
\\
\hline
$\|\bfx\|$ &56.8 & 25.7 & 48.5 &25.6  \\
$\|\Delta\bfx\|$ &3.8 &3.9  & 10.1 &16.0 \\
$\|\Delta\bfx\|/\|\bfx\|$ &6.8\%  &14.3\% &21.4\% &61.4\% \\
\end{tabular}
}
\label{table:feature}
\end{table}

\subsection{Performance of \effnorm}
Our heuristic normalizer \effnorm is proven to be effective at reducing the syntax difference between the original inputs and their adversarial examples. \autoref{table:feature} shows the average $\ell_2$ norm of the original input, the norm of the adversarial perturbation and their ratio in feature space under full attack. Without \effnorm, the perturbation ratio is 21.4\% for vanilla-RF and as much as 61.4\% for vanilla-LSTM. The adversarial transformations have drastically changed the feature representation of the input. However, after passing both the clean and adversarial example through \effnorm, the distortion ratio is reduced to merely 6.8\% for RF and 14.3\% for LSTM. The reduction in distortion suggests that the attacker can manipulate the model input to a much less degree, which results in higher accuracy under attack. 

The effectiveness of \effnorm can be further validated by the performance of our \nandp framework. Recall that the matched attack uses the same transformation considered by our normalizer. In the ideal case of a perfect normalizer, \nandp models should have the same accuracy over the clean inputs and the adversarial examples created by the matched attack. \effnorm is close to the ideal case as the discrepancy between the clean accuracy and the accuracy against the matched attack is only 3.8\% for RF and 4.0\% for LSTM.

\subsection{Running Time}
We show the training time of the learning frameworks in \autoref{table:time}. The vanilla models take less than 12 hours to train on our infrastructure. 
Compared to the vanilla learning pipeline, \nandp incurs an overhead in normalization. It takes less than 12 hours to normalize the entire data set, which is in the same order of the vanilla training time. In contrast, adversarial training requires much longer training time. Even after we reduce the search parameters in the attack, adversarial training still takes more than 20 days to finish on the same infrastructure, which is 40x more than the vanilla training time. \nandp shows a clear advantage in terms of running time.

\begin{table}[h!]
\centering
\caption{Training time of different pipelines.}
\renewcommand{\arraystretch}{1.25}
\resizebox{0.4\textwidth}{!}{%
\begin{tabular}{c|c|c} 
 Vanilla 
& \nandp Overhead
& Adv. Training Overhead
\\
\hline
< 12 hrs &< 12 hrs  &>  20 days\\
\end{tabular}
}
\label{table:time}
\end{table}

\section{Related Work}
\label{sec:related_work}

Test-time attacks using adversarial examples have been extensively studied in the past several years. Research has shown ML models are vulnerable to such attack in a variety of application domains~\cite{moosavi2016deepfool, chen2017zoo, papernot2017practical, eykholt2018robust, ebrahimi2018hotflip, qin2019imperceptible, yang2020greedy, yefet2020adversarial} including system security and program analysis where reliable defense is absolutely essential. In this paper, we focus on the authorship attribution task that has been compromised by \cite{USENIX-authorthisp,SCAD-TIFS}. Similarly,
\cite{grosse2017adversarial} and~\cite{al2018adversarial} successfully evade API/library usage based malware detectors by adding redundant API calls; \cite{rosenberg2018generic},~\cite{hu2018black}, and~\cite{rosenberg2019defense} successfully attack running-time behavior based detectors by adding redundant execution traces. In contrast to the early attack that directly perturbs the feature representation, \cite{pierazzi2020problemspace} emphasizes the importance of generating adversarial examples in the \emph{problem-space} and creates realistic attack instances using automated software transplantation.
Subsequent work~\cite{USENIX-authorthisp, SCAD-TIFS, ramakrishnan2020semantic} has shown effective attacks using semantic-preserving transformation in the problem-space.

Both the machine learning and the security communities have put in vast amount of effort to enhance model robustness v.s. test-time adversarial manipulation. Adversarial training~\cite{madry2018towards, wong2020fast, shafahi2019adversarial} is one of the most successful principled defense mechanism for deep learning models. For non-parametric models, \cite{wang2018analyzing, yang2019adversarial} proposes data-pruning based defenses with guarantees. However, these defenses mainly consider adversarial perturbations with small $\ell_p$ norm. Problem-space transformations, however, can cause large perturbation in the feature space that exceeds the limit of protection of these defenses.
In response to such limitation, recent work starts to develop defense mechanisms for transformation based adversaries~\cite{yang2019invariance,grosse2017adversarial, al2018adversarial, rosenberg2019defense,Incer2018ARM, kouzemtchenko2018defending}. Yang et al.~\cite{yang2019invariance} adds invariance-induced regularizers to the training process against a specific spatial transformation attack in image classification to achieve high model accuracy. In contrast, our work considers a general adversary based on logic relations and also emphsizes provable robustness guarantee. Hendrycks et al. \cite{hendrycks2019using} uses the semi-supervised learning technique which transforms the input in training and let the model learn to both detect the transformation and predict the class label. However, the model only detects single or a short sequence of transformations in the training process and thus cannot handle long sequence of transformations in test-time.
\cite{grosse2017adversarial, al2018adversarial, rosenberg2019defense, zhang2021challenging} improve robustness via adversarial training with transformed inputs; we show such approach is hard to optimize.~\cite{Incer2018ARM, kouzemtchenko2018defending} enforce monotonicity over model outputs so that the addition of feature values always increase the maliciousness score. These approaches are limited to guarding against the addition attacks, thus lacks generality. 

Normalization is a technique to reduce the number of syntactically distinct instances.
First introduced to network security in the early 2000s in the context of intrusion detection systems~\cite{handleypaxsonkreibich-sec01},
it was later applied to 
malware detection~\cite{christodorescujhakinder-jcv07, Coogan:2011:DVS:2046707.2046739, Bichsel:2016:SDA:2976749.2978422, Salem:2016:MRO:3015135.3015136, Baumann:2017:ATA:3099012.3099020}. Our work addresses the open question whether normalization is useful for ML under relational adversary by investigating its impact on both model robustness and accuracy.

\section{Conclusion and Future Work}
\label{sec:conclusion}
In this work, we set the first step towards robust learning against relational adversaries: we theoretically characterize the conditions for robustness and the sources of robustness-accuracy trade-off, and propose a provably robust learning framework. Motivated by the theoretical insights, we instantiate the framework for the application of authorship attribution. Our empirical evaluation shows that the normalize-and-predict approach can significantly enhance model robustness against attacks using semantic-preserving transformation at much computation overhead.

For future work, we see promising directions in both improving the normalizer and using the \nandp framework in broader applications. A normalizer for general purpose transformations over more types of codes and programming languages can help gauging the robustness of models in more applications with larger scale. \nandp could also be used in non-adversarial setting, e.g. removing biases in data sets and enhancing model interpretability. Last, we are eager to explore the synergy of \nandp with other defense mechanism such as adversarial training to harness the benefits from both worlds.

\bibliographystyle{plain}
\bibliography{robust-detection,mohannad}

\newpage
\appendix
\section{Proofs and Explanation for Theoretical Results}
\label{sec:appendix-proof}
In this section, we present the omitted proofs for theorems and observations due to page limit of the main body.

\subsection{Proof for Observation~\ref{thm:robustcondition}}
The if direction holds because any $\bfx \in C_i$ can only be transformed into $\bfz \in C_i$ by the maximal property of weakly connected component. The contrapositive of the only if direction holds because if $f(\bfx)\neq f(\bfz)$ for a pair $\bfx, \bfz \in C_i$, then there must exist two adjacent nodes $\bfz_1, \bfz_2$ on the path between $\bfx$ and $\bfz$ such that one can be transformed to another yet $f(\bfz_1) \neq f(\bfz_2)$.

\subsection{Proof for Observation~\ref{thm:tradeoff}}
\label{sec:tradeoffproof}
We first write the full version for Observation~\ref{thm:tradeoff} in the following claim.
\begin{claim}
\label{claim:tradeoff}
Let $\calR'$ and $\calR$ be two relations such that $\calR' = \calR\bigcup\{(\bfx,\bfz)\}$, i.e. $\calR'$ allows an extra transformation from $\bfx$ to $\bfz$ than $\calR$. Let $G_{\cal{R}}, G_{\calR'}$ denote their relation graphs and $C_{\bfx, \calR}$ be the weakly connected component (WCC) in $G_{\calR}$ that contains $\bfx$. 
In addition, let $\mu,\eta$ be the same as defined in Sec~\ref{subsec:trade-off}, and $l_{C} = \arg\max_{l\in\calY}\sum_{\bfx\in C}\mu(\bfx)\eta(\bfx,l)$, i.e. the most likely label of inputs in $C$.
Then the change of best attainable robust accuracy from $\calR$ to $\calR'$, denoted by $\Delta_{\calR, \calR'}$, is
\begin{align}
\Delta_{\calR,\calR'} & = Acc^*_{\calR'} - Acc^*_{\calR}\\ 
& = \sum\limits_{\bfx'\in C_{\bfx, \calR'}}\mu(\bfx')\eta(\bfx,l_{C_{\bfx, \calR'}})\notag \\
& \quad - \sum\limits_{\bfx'\in C_{\bfx, \calR}}\mu(\bfx')\eta(\bfx',l_{C_{\bfx, \calR}})\notag \\
& \quad - \sum\limits_{\bfx'\in C_{\bfz, \calR}\backslash C_{\bfx, \calR}}\mu(\bfx')\eta(\bfx', l_{C_{\bfz, \calR}}).
\end{align}
The change $\Delta_{\calR,\calR'} \leq 0$ for all $\calR, \calR'$ pairs, and the equality only holds when $l_{C_{\bfx,\calR}} = l_{C_{\bfz,\calR}}$.
\end{claim}

\begin{proof}
First, we observe that $C_{\bfx,\calR'} = C_{\bfx,\calR}\bigcup C_{\bfz,\calR}$. 
Notice that $(\bfx,\bfz)$ will not change the graph structure outside $C_{\bfx,\calR}\bigcup C_{\bfz,\calR}$: the maximal property of WCC guarantees that neither $\bfx$ nor $\bfz$ connect to nodes outside their own WCC. 
If $C_{\bfx, \calR}$ and $C_{\bfz,\calR}$ are two disjoint WCCs, then the path $\bfx\to_{\calR}\bfz$ will join them to form $C_{\bfz,\calR'}$. Otherwise, $\bfx,\bfz$ are already in the same WCC, and thus $C_{\bfx,\calR'} = C_{\bfx,\calR}=C_{\bfz,\calR}$.

Since the graph structure outside $C_{\bfx,\calR'}$ does not change, it suffices to only look at change of best robust accuracy within $C_{\bfx,\calR'}$. The term $\sum\limits_{\bfx'\in C_{\bfx, \calR'}}\mu(\bfx')\eta(\bfx,l_{C_{\bfx, \calR'}})$ is the best robust accuracy in $C_{\bfx, \calR'}$. When $C_{\bfx, \calR} \neq C_{\bfz,\calR}$, the term $\sum\limits_{\bfx'\in C_{\bfx, \calR}}\mu(\bfx')\eta(\bfx',l_{C_{\bfx, \calR}})$ and $\sum\limits_{\bfx'\in C_{\bfz, \calR}\backslash C_{\bfx, \calR}}\mu(\bfx')\eta(\bfx', l_{C_{\bfz, \calR}})$ are the best robust accuracy in $C_{\bfx,\calR}$ and $C_{\bfz,\calR}$, respectively. When $C_{\bfx, \calR} = C_{\bfz, \calR}$, the latter term becomes zero and the former is the best robust accuracy in $C_{\bfx,\calR}$. In both cases, the equation in Claim~\ref{claim:tradeoff} holds by definition.

Next, we show $\Delta_{\calR, \calR'} \leq 0$. First, consider $l_{C_{\bfx,\calR}} \neq l_{C_{\bfz,\calR}}$. No matter what $l_{C_{\bfx,\calR'}}$ is, it will be different from at least one of $l_{C_{\bfx,\calR}}, l_{C_{\bfz,\calR}}$. Suppose, $l_{C_{\bfx,\calR'}}\neq l_{C_{\bfx,\calR}}$, then the robust accuracy for inputs in $C_{\bfx,\calR}$ will drop. Similarly, the accuracy in $C_{\bfz,\calR}$ will drop if $l_{C_{\bfx,\calR'}}\neq l_{C_{\bfz,\calR}}$. Therefore, $\Delta_{\calR, \calR'} < 0$. Second, when $l_{C_{\bfx,\calR}} = l_{C_{\bfz,\calR}}$, then we will have $l_{C_{\bfx,\calR'}} = l_{C_{\bfx,\calR}} = l_{C_{\bfz,\calR}}$ by definition, and the expression for $\Delta_{\calR,\calR'}$ will evaluate to $0$.
\end{proof}

We also note that if the majority label $l_{C,\calR}$ is not unique for $C_{\bfx,\calR}$ and/or $C_{\bfz,\calR}$, then we consider $l_{C_{\bfx,\calR}} = l_{C_{\bfz,\calR}}$ if any majority label for $C_{\bfx,\calR}$ matches any one for $C_{\bfz,\calR}$.

\subsection{Proof for Proposition~\ref{thm:pspace}}
\label{sec:pspaceproof}
\begin{proposition}[Hardness of Inner Maximization]
\label{thm:pspace}
The inner optimization problem of adversarial training is PSPACE-hard for relational adversaries. 
\end{proposition}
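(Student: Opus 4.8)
The plan is to recast the inner maximization as a reachability query and then give a generic reduction from an arbitrary language in \textsf{PSPACE}. For the $0/1$ classification loss, the inner problem $\max_{A(\cdot)} \ell(f, A(\bfx), y)$ equals $\max_{\bfz\,:\,\bfx \to_{\calR}^* \bfz} \mathds{1}(f(\bfz)\neq y)$, whose value is $1$ exactly when some point $\bfz$ reachable from $\bfx$ under the reflexive-transitive closure is misclassified. Thus the decision version of the inner problem is: given a classifier $f$, an input $\bfx$, a label $y$, and a succinctly represented relation $\calR$, does there exist $\bfz$ with $\bfx \to_{\calR}^* \bfz$ and $f(\bfz)\neq y$? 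Since $\to_{\calR}^*$ is precisely multi-step reachability in the relational graph $G_{\calR}$, and $G_{\calR}$ can have exponentially many vertices while each edge is only locally specified by the rules defining $\calR$, this is a reachability question on a succinctly described graph, which is the canonical source of \textsf{PSPACE}-hardness.

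I would realize the reduction through the configuration graph of a space-bounded Turing machine. Fix any $L\in\textsf{PSPACE}$, decided by a deterministic machine $M$ using space $p(n)$ for some polynomial $p$ (deterministic suffices, since $\textsf{PSPACE}=\textsf{NPSPACE}$ by Savitch). Given an input $w$ of length $n$, encode each configuration of $M$ on $w$ as a feature vector in $\calX=\{0,1\}^{O(p(n))}$, and let $\calR$ contain the pair $(\bfz,\bfz')$ exactly when $\bfz'$ is the successor configuration obtained from $\bfz$ by one step of $M$. Membership in $\calR$ is decidable in polynomial time, because a single transition only rewrites the local window around the head, so $\calR$ is an admissible relation of exactly the kind the framework allows (a polynomial-time-checkable set of transformation rules). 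Take $\bfx$ to be the initial configuration of $M$ on $w$, and assume without loss of generality a unique accepting configuration $\bfx_{\mathrm{acc}}$ (have $M$ clear its tape and reset the head before accepting). Let $f$ output one label on $\bfx_{\mathrm{acc}}$ and a different label everywhere else, and set $y$ to the latter. Then $f$ is trivially efficiently computable, and the adversary attains loss $1$ iff $\bfx_{\mathrm{acc}}$ is reachable from $\bfx$, iff $M$ accepts $w$, iff $w\in L$.

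It then remains to observe that the map $w\mapsto(f,\bfx,y,\calR)$ runs in polynomial time, since $\bfx$, the description of $f$, and the predicate defining $\calR$ all have size polynomial in $n$; hence any algorithm solving the inner maximization would decide the arbitrary \textsf{PSPACE} language $L$, establishing \textsf{PSPACE}-hardness. The step I expect to be the main obstacle is not the reachability intuition, which is standard, but making the reduction faithful to the paper's model: one must argue that the constructed $\calR$ is a legitimate succinctly represented relation (each single transformation efficiently decidable, while the transitive closure silently encodes the entire exponential-length computation via the locality of the transition rule) and that $f$ lies in a reasonable hypothesis class $\calH$ of efficiently computable predicates, so that the instance is genuinely one the adversarial-training objective would face. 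I would also remark, for completeness though it is not required by the statement, that the problem lies in \textsf{PSPACE} --- a Savitch-style search stores only the current $\bfz$ and repeatedly guesses a transformation, using space polynomial in the encoding length --- so the inner maximization is in fact \textsf{PSPACE}-complete; and if an explicit complete source is preferred over the generic reduction, one could equivalently reduce from \textsc{Tqbf} or from propositional \textsc{Strips} planning, the latter mapping transformation rules directly onto planning operators.
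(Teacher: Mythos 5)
Your proof is correct, and its opening move is the same as the paper's: under the $0$-$1$ loss, the inner maximization $\max_{\bfz \in \calT(\bfx)} \ell(f,\bfz,y)$ has value $1$ exactly when some $\bfz$ with $\bfx \to_{\calR}^* \bfz$ is misclassified, i.e.\ the problem is reachability over a succinctly described relational graph. Where you diverge is in how hardness of that reachability question is established. The paper's proof is essentially a citation: it identifies the loss with an arbitrary predicate $\alpha(\bfz)$ and invokes the known \textsf{PSPACE}-completeness of reachability analysis in model checking, pointing to Kozen's reduction from emptiness-of-intersection for a family of DFAs. You instead give a self-contained generic reduction from an arbitrary $L \in \textsf{PSPACE}$: encode configurations of the deciding machine as binary vectors, let $\calR$ be the one-step transition relation (polynomial-time checkable by locality of transitions), take $\bfx$ to be the initial configuration, and let $f$ be the indicator of the unique accepting configuration, so loss $1$ is attainable iff $M$ accepts $w$. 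Your route buys several things the paper leaves implicit: an explicit polynomial-time reduction map $w \mapsto (f,\bfx,y,\calR)$; an explicit classifier $f$ realizing the predicate (the paper never constructs $f$, even though the proposition is stated for the $0$-$1$ classification loss $\mathds{1}(f(\bfz)\neq y)$ rather than for arbitrary predicates); an argument that the constructed $\calR$ is a legitimately succinct, efficiently decidable relation of the kind the threat model admits; and, as a bonus, \textsf{PSPACE} membership and hence completeness. The paper's route is shorter and leans on established model-checking results. One minor nit: you do not need Savitch's theorem to assume the machine is deterministic, since \textsf{PSPACE} is defined via deterministic machines; Savitch is only needed for your separate membership remark.
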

We first write the full statement of Proposition~\ref{thm:pspace} in the following theorem.
\begin{theorem}
\label{claim:pspace}
Let $\calR \subseteq \{ 0,1 \}^d \times \{ 0,1 \}^d$ be a relation. Given a function $f$, an input $\bfx \in \{ 0,1 \}^d$ and a feasible set $\calT(\bfx) = \{\bfz:\bfx\to^*_{\calR}\bfz\}$, solving
the following maximization problem:
\[
\max_{\bfz \in \calT(\bfx)} l(f,\bfz,y)
\]
is PSPACE-hard when $l(f,\bfx,y)$ is the 0-1 classification loss.
\end{theorem}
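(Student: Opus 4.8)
The plan is to prove hardness by reducing from a canonical PSPACE-complete problem, exploiting the fact that with the $0$-$1$ loss the maximization collapses to a reachability question. Concretely, $l(f,\bfz,y)=\mathds{1}(f(\bfz)\neq y)$ takes only the values $0$ and $1$, so $\max_{\bfz\in\calT(\bfx)} l(f,\bfz,y)=1$ exactly when there exists some $\bfz$ with $\bfx\to^*_{\calR}\bfz$ and $f(\bfz)\neq y$. Identifying the ``misclassified'' set $B=\{\bfz:f(\bfz)\neq y\}$ as a target set, the decision version of the problem is precisely: is some vertex of $B$ reachable from $\bfx$ in the relational graph $G_{\calR}$ on vertex set $\{0,1\}^n$? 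The task therefore reduces to showing that this reachability problem is PSPACE-hard.

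First I would fix the representation. Since $G_{\calR}$ has $2^n$ vertices, hardness can only hold when $\calR$ and $f$ are given succinctly rather than as explicit edge and value tables; an explicit list of edges would make reachability solvable in time polynomial in the input size. This matches the adversarial-ML setting, where $\calR$ is specified by a small set of transformation rules and $f$ is an efficiently evaluable model: given $\bfx,\bfz$ one can decide $(\bfx,\bfz)\in\calR$ in polynomial time, and given $\bfz$ one can compute $f(\bfz)$ in polynomial time. I would state this representation assumption explicitly at the outset.

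Next I would carry out the reduction from the acceptance problem of a polynomial-space-bounded Turing machine $M$ on input $w$, the prototypical PSPACE-complete problem (cf.\ the reachability-based characterizations in \citep{kozen:pspace}). Let $s=\mathrm{poly}(|w|)$ bound the space used by $M$. Each configuration of $M$ is encoded as a fixed-length binary string, padded to a common length $n=O(s)$, so configurations are elements of $\{0,1\}^n$. I would define $\calR$ so that $(\mathbf{c},\mathbf{c}')\in\calR$ holds exactly when $\mathbf{c}'$ is a legal successor configuration of $\mathbf{c}$ under the transition relation of $M$; because a single step alters only the symbol under the head together with the head position and state, this predicate is checkable in polynomial time, which supplies the required succinct representation of $\calR$. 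I take $\bfx$ to encode the initial configuration of $M$ on $w$, and, assuming without loss of generality that $M$ has a single canonical accepting configuration $\mathbf{c}_{\mathrm{acc}}$ (erase the tape and return the head to the origin before accepting), I define $f$ and $y$ so that $f(\bfz)\neq y$ iff $\bfz=\mathbf{c}_{\mathrm{acc}}$; this $f$ is merely an equality test and is trivially polynomial-time computable. Then $M$ accepts $w$ iff $\mathbf{c}_{\mathrm{acc}}$ is reachable from $\bfx$ along valid transitions iff $\max_{\bfz\in\calT(\bfx)} l(f,\bfz,y)=1$. Since $(\calR,f,\bfx,y)$ is constructible from $(M,w)$ in polynomial time, PSPACE-hardness follows.

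I expect the main obstacle to be not the reachability core, which is classical, but pinning down the representation and the uniform encoding so that the reduction is genuinely polynomial and respects the typing $\calR\subseteq\{0,1\}^n\times\{0,1\}^n$: one must pad all configurations to a single length determined by the space bound, ensure the successor predicate and $f$ are realized by polynomial-size circuits (equivalently, polynomial-time machines), and make the succinctness assumption precise, since under an explicit tabulation of $\calR$ the problem would instead be solvable in polynomial time. As a remark I would observe that the same reformulation places the decision problem in PSPACE as well --- reachability among $2^n$ nodes with a polynomial-time-checkable edge predicate, decided by the standard recursive midpoint search --- so the problem is in fact PSPACE-complete.
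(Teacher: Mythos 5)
Your proof is correct, and it shares the paper's first move --- observing that under the 0--1 loss the maximization collapses to a reachability question on the relational graph --- but it diverges in how PSPACE-hardness of that reachability problem is established. The paper simply cites the classical model-checking result that reachability analysis is PSPACE-complete, via reduction from the intersection-emptiness problem for a collection of DFAs \citep{kozen:pspace}; it also takes the shortcut of defining the loss to be an arbitrary predicate $\alpha(\bfz)$, leaving implicit how that predicate arises as the 0--1 classification loss of an actual classifier $f$. You instead give a self-contained reduction from acceptance of a polynomial-space Turing machine: configurations are padded binary strings, $\calR$ is the one-step transition relation (polynomial-time checkable), and $f$ is an equality test against a canonical accepting configuration, so the misclassification event $f(\bfz)\neq y$ is genuinely the 0--1 loss of a concrete, efficiently computable classifier. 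Your route buys three things the paper's sketch lacks: an explicit statement of the succinct-representation assumption (without which the claim fails, since reachability over an explicitly tabulated edge list is decidable in polynomial time), a faithful instantiation of $f$ and $y$ matching the theorem's hypothesis, and the remark that the problem also lies in PSPACE via midpoint search, hence is complete rather than merely hard. What the paper's approach buys is brevity and a pointer to the standard literature; both arguments are sound, and yours is the more rigorous of the two.
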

\begin{proof}
Let $\alpha: \{ 0,1 \}^d \rightarrow \{ 0, 1 \}$ be a
  predicate. Define a loss function $l(f,\bfz,y)$ as follows:
  $l(f,\bfz,y) = \alpha(\bfz)$ (the loss function is essentially the
  value of the predicate). Note that $\max_{\bfz \in \calT(\bfx)}
  l(f,\bfz,y)$ is equal to $1$ iff there exists a $\bfz \in \calT
  (\bfx)$ such that $\alpha(\bfz) = 1$. This is a well known problem
  in model checking called {\it reachability analysis}, which is well
  known to be PSPACE-complete (the reduction is from the problem of checking emptiness
  for a set of DFAs, which is known to be PSPACE-complete~\cite{kozen:pspace}).
\end{proof}
 Recall that the maximation problem $\max_{\bfz \in B_p
   (\bfx,\epsilon) } l(f,\bfz,y)$ used in adversarial training for the image modality was proven to be NP-hard~\cite{reluplex}.
 Hence it seems that the robust optimization problem in our context
 is in a higher complexity class than in the image domain.

\subsection{Model Capacity Requirement}
\label{subsec:modelcapcity}
In this section, we illustrate how the \nandp framework can potentially help reduce the model complexity for learning a robustly accurate classifier. We start with the following proposition.

\begin{proposition}[Model Capacity Requirement]
\label{thm:modelcapacity-full}
For some hypothesis class $\calH$ and relation $\calR$, there exists $f\in\calH$ such that $f(\calN_{\calR}(\cdot))$ is robustly accurate, but no $f\in\calH$ can be robustly accurate on the original inputs.  
In other words, robustly accurate classifier can only be obtained after normalization.
\end{proposition}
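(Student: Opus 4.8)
The plan is to prove the statement by exhibiting an explicit witness $(\calH, \calR, \calD)$ rather than arguing in generality, since the proposition asserts only existence. The guiding intuition is the classical fact that reducing the number of distinct inputs a classifier must handle can turn a non-separable problem into a separable one: normalization collapses each weakly connected component (WCC) of $G_{\calR}$ to a single representative, so $f$ need only separate the representatives, whereas on the raw inputs it must additionally remain \emph{constant} on every WCC by Observation~\ref{thm:robustcondition}, and for a low-capacity $\calH$ these two demands can conflict.

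Concretely, I would take $\calH$ to be the class of one-dimensional threshold classifiers on $\calX = \sR$ (the lowest-capacity natural class), place four atoms $x_1 < x_2 < x_3 < x_4$ of equal probability mass under $\calD$ with the \emph{interleaved} deterministic labels $+,-,+,-$, and let $\calR$ be the symmetric relation whose edges connect the two same-label pairs, so that the WCCs are $\{x_1, x_3\}$ with label $+$ and $\{x_2, x_4\}$ with label $-$. This geometry is the one-dimensional analogue of XOR: same-label points straddle opposite-label points.

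The first step is to characterize the robust members of $\calH$ on the original inputs. By Observation~\ref{thm:robustcondition}, a threshold classifier is robust exactly when it assigns a single label to all of $\{x_1,x_3\}$ and a single label to all of $\{x_2,x_4\}$. A short case analysis over the position of the threshold $\theta$ then shows that the only admissible robust thresholds are $\theta \le x_1$ (labeling everything $+$) and $\theta > x_4$ (labeling everything $-$), each of which misclassifies two atoms; every intermediate $\theta$ splits one of the same-label pairs and hence violates robustness. Thus no $f \in \calH$ attains robust accuracy $1$ on the raw inputs. The second step verifies the normalized side: choosing $\calN_{\calR}$ as in Proposition~\ref{lem:normal} to send each WCC to a fixed representative, the task reduces to separating the single representative of $\{x_1,x_3\}$ (label $+$) from that of $\{x_2,x_4\}$ (label $-$); a threshold placed between them with the correct orientation yields $f(\calN_{\calR}(\bfx))$ that is correct on every atom and, by construction, constant on each WCC, hence robust, attaining robust accuracy $1$.

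The step I expect to require the most care is the case analysis establishing that \emph{no} threshold is simultaneously robust and perfectly accurate: I must enumerate the intervals into which $\theta$ can fall, impose the WCC-constancy constraint from Observation~\ref{thm:robustcondition}, and confirm that each resulting robust threshold forces a misclassification, while also checking both orientations of the threshold family so that the obstruction is genuine rather than an artifact of an accidentally over-restricted $\calH$, and verifying that $\calN_{\calR}$ is a legitimate normalizer in the sense of Proposition~\ref{lem:normal}. If a higher-dimensional witness is preferred, the identical argument carries over verbatim with $\calH$ the linear separators on $\sR^2$ and the four atoms placed at the corners of a square under XOR labels, the WCCs joining the diagonal (same-label) corners.
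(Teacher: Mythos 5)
Your proposal is correct, but it proves the proposition by a genuinely different witness than the paper does. The paper's proof (Claim~\ref{claim:modelcapacity}) stays inside the malware-flavored setting: $\calX=\{0,1\}^d$, $\calH$ the linear threshold functions, $\calR$ the equivalence relation induced by two interchangeable coordinates $\bfx_1,\bfx'_1$ (Definition~\ref{eg:equiv}), and labels given by a Boolean formula depending only on $\bfx_1\vee\bfx'_1$ and three other coordinates; impossibility is established by extracting four accuracy constraints on specific inputs, which form a contradictory system of linear inequalities ($\bfw'_1>\bfw_4$ and $\bfw'_1<\bfw_4$), and possibility after normalization is shown by exhibiting explicit weights once $\bfx_1$ and $\bfx'_1$ are merged. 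Your construction---four interleaved atoms on $\sR$ with XOR-style labels, $\calH$ the one-dimensional thresholds (both orientations), and $\calR$ joining the two same-label pairs---has the identical logical skeleton: the raw task is not separable by $\calH$ at all (so a fortiori no $f\in\calH$ is robustly accurate, which is also how the paper's inequalities work: both impossibility arguments are really \emph{accuracy} obstructions, which suffice since robust accuracy is bounded by accuracy), while normalization collapses each WCC to a single representative, leaving two distinct points that any correctly oriented threshold separates, and robustness of $f(\calN_{\calR}(\cdot))$ is then automatic by Proposition~\ref{lem:normal}. Your route buys minimality and transparency---it is the smallest possible instance of the phenomenon and the case analysis is elementary; the paper's route buys domain relevance, since its witness lives in the Boolean feature space and equivalence relation (API substitution) actually used in the experiments, so it doubles as evidence that the capacity gap arises in practice rather than only in a contrived geometry. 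One point of care you rightly flag: the conclusion must not be an artifact of an over-restricted $\calH$, which you handle by including both threshold orientations; with that, your proof is complete and valid.
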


We first define an equivalence relation induced by equivalent coordinates over binary inputs, and then write the formal statement of the observation in the following claim.

\begin{definition}[Equivalence relation induced by equivalent coordinates]
\label{eg:equiv}
Let $\bfx = (\bfx_1, \cdots, \bfx_d)$ be a binary input vector on $\{0,1\}^d$, where each $\bfx_i, i\in\{1,\cdots,d\}$ is a coordinate.
Let $I = \{1,\cdots,d\}$ be the set of coordinate indices for inputs in $\calX$ and $U = \{i_1, \cdots, i_m\} \subseteq I$. In an equivalence relation $\calR$ induced by $U$, $\bfx \to_{\calR} \bfz$ \emph{iff} 1) $\bfx_i = \bfz_i$ for all $i \in I\backslash U$, and 2) $\bigvee_{i\in U}\bfx_i = \bigvee_{i\in U}\bfz_i$. Notice that $\bfx \to_\calR \bfz$ iff $\bfz \to_{\calR}\bfx$. 
\end{definition}
The notation $\bigvee_{i\in U}\bfx_i$ means taking a {\it logic or} operation over all $\bfx_i$s for $i\in U$.
Intuitively, having an equivalence relation induced by coordinates with indices in $U$ means the presence of any combination of such coordinates is equivalent to any other combination. Usage of interchangeable APIs in malware implementation is an example of equivalence relation: the attacker can choose any combination from a set of equivalent APIs to implement the same functionality.    

In Definition~\ref{eg:equiv}, we use $U$ to represent the set of \emph{indices} of the equivalent coordinates. In the following theorems and proofs, we overload $U$ to also represent the set of equivalent coordinates directly, and the $\bigvee$ operation will be taken over all coordinates in $U$. 
\begin{claim}
\label{claim:modelcapacity}
Consider $\calX = \{0,1\}^5$ and $\calY=\{0,1\}$. Let the coordinates of an input $\bfx\in\calX$ be $\{\bfx_1, \bfx'_1, \bfx_2, \bfx_3, \bfx_4\}$. Suppose we have an equivalence relation induced by $U = \{\bfx_1,\bfx'_1\}$. Meanwhile, the true label $y$ of an input $\bfx$ is $1$ \emph{iff} any of the following clauses is true: 1) $(\bfx_2=1)\wedge(\bfx_3 = 1)$, 2) $(\bfx_1\vee\bfx'_1=1)\wedge(\bfx_2=1)$, 3) $(\bfx_1\vee\bfx'_1=1)\wedge(\bfx_3=1)\wedge(\bfx_4=1)$. Then
\begin{enumerate}
\item no linear model can classify the inputs with perfect robust accuracy, but
\item a robust and accurate linear model exists under \nap.
\end{enumerate}
\end{claim}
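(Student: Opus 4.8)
The plan is to prove the two parts of Claim~\ref{claim:modelcapacity} separately. For part~1, I would argue that requiring perfect robust accuracy forces a linear classifier to violate the label structure. The key observation is that $\bfx_1$ and $\bfx'_1$ are interchangeable under the equivalence relation $\calR$ of Definition~\ref{eg:equiv}: setting $\bfx_1=1,\bfx'_1=0$ and setting $\bfx_1=0,\bfx'_1=1$ lie in the same weakly connected component, so by Observation~\ref{thm:robustcondition} any robust $f$ must assign them the same label. Hence robustness collapses the two features into a single effective variable $u = \bfx_1\vee\bfx'_1$, and the label rule reduces to a Boolean function of $(u,\bfx_2,\bfx_3,\bfx_4)$ given by $y=1 \iff (\bfx_2\wedge\bfx_3)\vee(u\wedge\bfx_2)\vee(u\wedge\bfx_3\wedge\bfx_4)$. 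The concrete step is to pick a handful of assignments of $(u,\bfx_2,\bfx_3,\bfx_4)$ witnessing that this function is not linearly separable — i.e., exhibit points $p_1,p_2$ labeled $1$ and $p_3,p_4$ labeled $0$ whose convex hulls intersect (a small XOR-like gadget). A linear model $\langle \vw,\cdot\rangle + b$ that is robust must depend on $\bfx_1,\bfx'_1$ only through $u$ (equal weights forced by robustness), so its restriction to these four points is still affine, and the intersecting-hull obstruction rules out any sign assignment matching the labels.

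For part~2, I would exhibit an explicit linear model that becomes robust and accurate once the inputs are normalized by $\calN_{\calR}$. The normalizer maps every input to a canonical representative of its WCC; a natural choice is to set $\bfx_1 \leftarrow u$ and $\bfx'_1\leftarrow 0$, so that after normalization the effective feature is exactly $u\in\{0,1\}$. On normalized inputs the three defining clauses are a fixed monotone Boolean formula in $\{u,\bfx_2,\bfx_3,\bfx_4\}$, and I would check that \emph{this} restricted function is linearly separable (it no longer contains the parity-style conflict that blocked part~1, because robustness is now automatic and the remaining function can be realized by a threshold over a suitable nonnegative weight vector). Concretely I would propose weights such as $\vw = (w_u, w_2, w_3, w_4)$ with a threshold $b$ chosen so that each satisfying clause pushes the score above $0$ and every falsifying assignment stays below; the monotonicity of the formula makes this a routine threshold-fitting exercise over the $2^4$ truth-table rows. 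Robustness then follows from Proposition~\ref{lem:normal}, since $f(\calN_{\calR}(\cdot))$ is constant on each WCC by construction.

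The main obstacle is part~1: I must show that \emph{no} linear model — not merely the natural candidate — can be perfectly robustly accurate. The argument has to combine two constraints simultaneously: robustness forces equal weights on $\bfx_1,\bfx'_1$ (via Observation~\ref{thm:robustcondition}), and accuracy forces the decision surface to separate the reduced label function. The delicate point is verifying that the reduced function in $(u,\bfx_2,\bfx_3,\bfx_4)$ is genuinely non-separable rather than merely awkward; I would pin this down by identifying an explicit contradictory chain of inequalities among the four witness points (of the form "if $p_1,p_2$ are on the positive side and $p_3,p_4$ on the negative side, summing the corresponding score inequalities yields $0>0$"). Once that linear infeasibility is exhibited, both parts close cleanly, and the contrast between the non-separable original function and the separable normalized function is exactly what establishes the gap in required model capacity that Proposition~\ref{thm:modelcapacity-full} asserts.
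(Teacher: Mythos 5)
Your part 2 is essentially the paper's argument (normalize by setting $\bfx_1 \leftarrow \bfx_1\vee\bfx'_1$, drop $\bfx'_1$, and fit a threshold to the resulting monotone function; the paper's concrete choice is $\bfw_1=0.4,\bfw_2=0.7,\bfw_3=0.5,\bfw_4=0.2$, $b=-1$), so that half is fine. The genuine gap is in part 1. Your plan is to collapse $(\bfx_1,\bfx'_1)$ into $u=\bfx_1\vee\bfx'_1$ and then exhibit an XOR-style witness showing that the reduced function of $(u,\bfx_2,\bfx_3,\bfx_4)$ is not linearly separable. But that reduced function \emph{is} linearly separable --- this is exactly what your own part 2 (and the paper's) proves by exhibiting a separator on normalized inputs. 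So no four points with opposite labels and intersecting convex hulls exist in the reduced space, and your proposal is internally inconsistent: part 1 asserts the non-separability of the very function that part 2 separates. The intermediate reductions are also unsound: robustness of a linear classifier constrains the \emph{signs} of its scores, not its weights, so it does not force $\bfw_1=\bfw'_1$; and even equal weights would not make the score factor through $u$, because $(\bfx_1,\bfx'_1)=(1,1)$ contributes $2\bfw_1$ while $(1,0)$ contributes $\bfw_1$. The only linear scores that are functions of $u$ alone are those with $\bfw_1=\bfw'_1=0$, which is far stronger than what robustness of predictions gives you.

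The real obstruction lives in the original five coordinates and comes from the additivity of a linear score over $(\bfx_1,\bfx'_1)$ clashing with the OR-collapse of the labels; this is what the paper exploits, and notably it needs only \emph{accuracy} on four legal inputs, not any robustness constraint. With coordinates ordered $(\bfx_1,\bfx'_1,\bfx_2,\bfx_3,\bfx_4)$ (all other features $0$), take $p_1=(0,0,1,0,1)$ with $y=0$, $p_2=(0,1,1,0,0)$ with $y=1$, $p_3=(1,1,0,1,0)$ with $y=0$, and $p_4=(1,0,0,1,1)$ with $y=1$ (check each against the three clauses). Then $p_1+p_3=p_2+p_4$, so summing the required inequalities $\langle\bfw,p_2\rangle+b>0$, $\langle\bfw,p_4\rangle+b>0$, $\langle\bfw,p_1\rangle+b<0$, $\langle\bfw,p_3\rangle+b<0$ gives the contradiction $0>0$; equivalently, $p_1,p_2$ force $\bfw'_1>\bfw_4$ while $p_3,p_4$ force $\bfw'_1<\bfw_4$. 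The crucial feature of this gadget is that it uses all three settings $(0,1)$, $(1,0)$, $(1,1)$ of $(\bfx_1,\bfx'_1)$ inside a single equivalence class --- which is precisely why it disappears after normalization and why the claim demonstrates a capacity gap. If you replace your part 1 with this original-space argument, the proof closes and matches the paper.
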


\begin{proof}
Let $\calH = \{f_{\bfw,b}: sgn(\langle\bfw,\bfx\rangle+b)\}$.
Let $\bfw_1, \bfw'_1, \bfw_2,\bfw_3,\bfw_4$ denote the coordinates in $\bfw$ that corresponds to $\bfx_1, \bfx'_1, \bfx_2, \bfx_3, \bfx_4$. 

We know $y=1$ if $\bfx_1 = 1, \bfx_2 = 1$ and the other coordinates are zero because the second clause in the labeling rule is satisfied.
Therefore, in order to classify this input instance correctly, we must have $\bfw_2+\bfw_1+b >0$. Since $\bfx_1$ and $\bfx'_1$ are equivalent, we should also have
$\bfw_2+\bfw'_1+b >0$.

Similarly, we know $y=0$ if $\bfx_2=1, \bfx_4=1$ and the other coordinates are zero because none of the clauses are satisfied.
Therefore, we must have $\bfw_2+\bfw_4+b<0$. 

In order to classify all possible $\bfx$ correctly, the classifier $f_{\bfw,b}$ must satisfy
\begin{align}
\bfw_2 + \bfw_4 + b < 0 \label{eq:1}\\
\bfw_2 + \bfw_1 + b > 0 \label{eq:2}\\
\bfw_2 + \bfw'_1 + b > 0 \label{eq:3}\\
\bfw_3 + \bfw_1 + \bfw'_1 + b < 0 \label{eq:4}\\
\bfw_3 + \bfw_4 + \bfw_1 + b > 0 \label{eq:5}
\end{align}
First, by Formula~\ref{eq:1},~\ref{eq:2} and~\ref{eq:3}, we have $\bfw_1 > \bfw_4$ and $\bfw'_1 > \bfw_4$.
However, by Formula~\ref{eq:4} and~\ref{eq:5}, we have $\bfw_1+\bfw'_1 < \bfw_4 + \bfw_1$,
which implies $\bfw'_1<\bfw_4$. Contradition. Therefore, no linear classifier can satisfy all the equations.

On the other hand, if we perform normalization by letting $\bfx_1 = \bfx_1\vee\bfx'_1$ and removing $\bfx'_1$, then a classifier $f_{\bfw,b}$ -- with $\bfw_1 = 0.4, \bfw_2=0.7, \bfw_3 = 0.5, \bfw_4 = 0.2, b = -1$ -- can perfectly classify $\bfx$.
\end{proof}

\subsection{Proof for Theorem~\ref{thm:equiv}}
\label{sec:equivproof}
We first write down the full formal statement.
\begin{theorem}[Preservation of robust accuracy]
\label{thm:equiv-full}
Consider $\calH$ to be the set of all labeling functions on $\{0,1\}^d$.
Let $f^*$ be the classifier that minimizes the objective of our unified framework over data distribution $\calD$, i.e. the optimal solution to
$$
\min\limits_{f\in\calH}\max\limits_{A(\cdot)}\bbE_{(\bfx,y)\sim \calD}\ell\left(f,A\left(\calN_{\calR'}(\bfx)\right), y\right),
$$
where $\ell$ is the 0-1 classification. Meanwhile, let $f^*_{adv}$ be the classifier that minimizes the objective of adversarial training over $\calD$, i.e.
the optimal solution to 
$$
\min\limits_{f\in\calH}\max\limits_{A(\cdot)} \bbE_{(\bfx,y)\sim \calD}\ell(f, A(\bfx), y).
$$
Then in principle, $f^*(\calN_{\calR'}(\cdot))$ and $f^*_{adv}$ have the same optimal robust accuracy if $\calR'$ is reversible.
\end{theorem}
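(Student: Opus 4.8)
The plan is to show that, under reversibility, both optimization problems collapse to the \emph{same} combinatorial problem—assigning a single label to each weakly connected component (WCC) of $G_{\calR'}$—and that both minimizers solve it optimally, so that both resulting classifiers attain the optimal robust accuracy $Acc^*_{\calR'}$ characterized in Section~\ref{subsec:trade-off}. First I would record the structural consequence of Definition~\ref{def:r'}: when $\calR'$ is reversible, the reachability relation $\to_{\calR'^*}$ is symmetric (and always reflexive-transitive), hence an equivalence relation, so for every $\bfx$ the feasible set $\calT(\bfx)=\{\bfz:\bfx\to_{\calR'^*}\bfz\}$ of the $\calR'$-relational adversary equals the entire WCC $C_{[\bfx]}$ containing $\bfx$. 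The short argument is that any undirected path in $G_{\calR'}$ can be traversed forward directly, or backward via reversibility, and then chained by transitivity.

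Next I would compute the inner maximization of the $0$--$1$ loss. For a sample $(\bfx,y)$ with $\bfx\in C_i$, since the adversary can reach any point of $C_i$,
\[
\max_{A}\mathds{1}\!\left(f(A(\bfx))\neq y\right)=\mathds{1}\!\left(\exists\,\bfz\in C_i:\ f(\bfz)\neq y\right)=\mathds{1}\!\left(f\text{ is not constant }y\text{ on }C_i\right).
\]
The identical collapse holds for the unified objective after normalization, because $\calN_{\calR'}(\bfx)\in C_i$ so the post-normalization feasible set is again all of $C_i$. Hence \emph{both} objectives equal $\E_{(\bfx,y)}\big[\mathds{1}(f\text{ not constant }y\text{ on }C_{[\bfx]})\big]$. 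I would then split the per-component contribution into two cases—$f$ constant $l^*$ on $C_i$, giving cost $\mu(C_i)-\sum_{\bfx\in C_i}\mu(\bfx)\eta(\bfx,l^*)$, versus $f$ non-constant on $C_i$, giving cost $\mu(C_i)$—to conclude that any minimizer must be constant on each WCC, with the optimal constant being the majority label $l_{C_i}=\arg\max_l\sum_{\bfx\in C_i}\mu(\bfx)\eta(\bfx,l)$. This is exactly the robustness constraint of Observation~\ref{thm:robustcondition}, and the induced minimal loss is $1-Acc^*_{\calR'}$.

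Finally I would read off the robust accuracies. Because $\calH$ is unrestricted, both minimizers realize $f^*(\cdot)=l_{C_i}$ and $f^*_{adv}(\cdot)=l_{C_i}$ on each $C_i$; by Proposition~\ref{lem:normal}, $f^*(\calN_{\calR'}(\cdot))$ is robust, and since $\calN_{\calR'}(\bfx)\in C_{[\bfx]}$ it predicts $l_{C_{[\bfx]}}$. Thus $f^*(\calN_{\calR'}(\cdot))$ and $f^*_{adv}$ agree on the support of $\calD$ and both attain robust accuracy $\sum_i\max_l\sum_{\bfx\in C_i}\mu(\bfx)\eta(\bfx,l)=Acc^*_{\calR'}$, the optimum identified above.

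The main obstacle is the inner-maximization collapse: everything hinges on the feasible set being the \emph{full} WCC, which is precisely where reversibility is indispensable. Without it, $\calT(\bfx)$ is only the forward-reachable set—possibly a strict subset of $C_{[\bfx]}$—so the adversarial-training objective no longer reduces to the symmetric ``not constant on the WCC'' form, the two objectives need not coincide, and (as the one-way example on the right of Figure~\ref{fig:tradeoff} shows) normalization can be strictly suboptimal. I would therefore invoke reversibility exactly at the step converting directed reachability into the undirected component, and flag that the ``in principle'' qualifier absorbs optimization and finite-sample effects, since the argument only identifies the population minimizers over the unrestricted class $\calH$.
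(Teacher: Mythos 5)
Your argument is internally coherent, but it proves a strictly weaker statement than the theorem: you have implicitly set $\calR=\calR'$, i.e., you assume the entire threat model is the reversible relation that gets normalized. In the paper's setting (Sec~\ref{sec:unified}), $\calR'$ is only a \emph{subset} of the threat relation $\calR$: the unified framework normalizes over $\calR'$ and adversarially trains against the residual relation $\calR\backslash\calR'$, while $f^*_{adv}$ is trained against --- and robust accuracy is measured with respect to --- the \emph{full} $\calR$-relational adversary. Your central step, the collapse of both inner maximizations to ``$f$ is not constant $y$ on the WCC of $G_{\calR'}$ containing $\bfx$,'' fails in that regime: the adversarial-training inner maximum ranges over the $\calR$-forward-reachable set of $\bfx$, which is not symmetric and strictly contains the component whenever $\calR\backslash\calR'\neq\emptyset$, while the unified inner maximum ranges over the $\calR\backslash\calR'$-reachable set of $\calN_{\calR'}(\bfx)$. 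The two objectives are then \emph{different} functionals of $f$, the per-component majority classifier is in general not a minimizer of either, and the common optimal value is not $Acc^*_{\calR'}$ (residual edges can strictly decrease it, exactly as in Observation~\ref{thm:tradeoff}). Nothing in your proof relates the minimizers of the two problems once $\calR\backslash\calR'$ is nonempty, and that relation is the entire content of the theorem.

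The paper bridges this gap with a construction your approach is missing. Starting from $f^*_{adv}$, it defines $f$ to be constant on each component $C$ of $G_{\calR'}$ (strongly connected, by reversibility), predicting the most accurate label \emph{among the labels $f^*_{adv}$ itself assigns inside $C$} --- not the unconstrained majority label. The key claim is that any successful $\calR$-attack on $f\circ\calN_{\calR'}$ at $(\bfx,y)$ converts into a successful $\calR$-attack on $f^*_{adv}$: reversibility gives $\bfx\to^*_{\calR'}\calN_{\calR'}(\bfx)$, the residual attack gives $\calN_{\calR'}(\bfx)\to^*_{\calR\backslash\calR'}\bfz$ with $f(\bfz)\neq y$, and since $f(\bfz)$ is a label $f^*_{adv}$ uses somewhere in $C_{\bfz}$, strong connectivity yields $\bfz\to^*_{\calR'}\bfz'$ with $f^*_{adv}(\bfz')=f(\bfz)\neq y$; chaining these moves exhibits $\bfx\to^*_{\calR}\bfz'$. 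Hence $f\circ\calN_{\calR'}$ is at least as robustly accurate as $f^*_{adv}$, and since $f^*_{adv}$ is optimal over the unrestricted class $\calH$, all three classifiers tie. To repair your proof you would need this (or an equivalent) argument converting failures of the normalized classifier under the residual relation into failures of $f^*_{adv}$ under the full relation; the equivalence-class collapse alone cannot supply it. Note also that both uses of reversibility here (reachability of the normal form from $\bfx$, and within-component mobility toward $\bfz'$) are different from the single use you make of it.
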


\begin{proof}
We know by definition that $f^*_{adv}$ has the highest possible robust accuracy. Therefore, it suffices to show that there exists a classifier $f(\calN_{\calR'}(\cdot))$ under our unified framework that has at least as good robust accuracy as $f^*_{adv}$.

We consider an $f$ under our unified framework as follows.
Let $C$ denote a connected component (which is also strongly connected for reversible relation) $C$ in $G_{\calR'}$ and $\calY_{C}$ be the set of all labels that $f^*_{adv}$ assign to inputs in $C$, i.e. $\calY_{C} = \{y\in\calY|\exists \bfx\in C\ s.t.\ f^*_{adv}(\bfx)=y \}$. Let $C_{\bfx}$ denote the connected component which contains $\bfx$. We consider a classifier $f$ such that
\begin{equation}
\label{eq:f}
f(\bfx) = \arg\max_{l\in\calY_{C_{\bfx}}} \bbE_{(\bfz,y)\sim \calD}\mathds{1}(\bfz\in C_{\bfx} \wedge y = l),
\end{equation}
i.e. $f$ predicts the same label for all inputs in a connected component, and chooses the label that maximizes the clean accuracy in the connected component. 
Notice that $f(\bfx) = f(\calN_{\calR'}(\bfx))$ by construction, so we will use $f(\bfx)$ instead of $f(\calN_{\calR'}(\bfx)$ for simplicity.

Recall that we say a classifier $f$ is robustly accurate over an input-label pair $(\bfx, y)$ under $\calR$ if and only if $f(\bfz)=y$ for all $\bfz:\bfx\to^*_{\calR}\bfz$. Now to show that $f$ with the prediction rule in Equation~\ref{eq:f} has the same robust accuracy as $f^*_{adv}$, we prove the following claim.
\begin{claim}
\label{thm:helpclaim}
The classifier $f$ is robustly accurate over $(\bfx,y)$ under $\calR$ if $f^*_{adv}$ is robustly accurate $(\bfx,y)$ under $\calR$.
\end{claim}
We prove the contrapositive of the claim. Suppose $f$ is not robustly accurate over $(\bfx,y)$. 
Let $\bfz$ denote the input such that $\bfx \to^*_{\calR}\bfz$ and $f(\bfz)\neq y$. 


First, by the definition of $f(\bfz)$, we know that there must be a $\bfz'\in C_{\bfz}$ such that $f^*_{adv}(\bfz') = f(\bfz) \neq y$ because $f$ only predicts the label that has been used by $f^*_{adv}$ over some input in the same connected component. Since $C_{\bfz}$ is strongly connected by our initial assumption, we must have $\bfz\to^*_{\calR'} \bfz'$.

Now, because $\bfx\to^*_{\calR}\bfz$ and $\bfz\to^*_{\calR'} \bfz'$, we have $\bfx\to^*_{\calR}\bfz'$. Since $\bfx\to^*_{\calR}\bfz'$ but $f^*_{adv}(\bfz') \neq y$, we can conclude that $f^*_{adv}$ is not robustly accurate at $\bfx$, and thus prove the claim. 

Claim~\ref{thm:helpclaim} suggests that $f$ has at least the same robust accuracy as $f^*_{adv}$. Since $f(\bfx) = f(\calN_{\calR'}(\bfx))$ by our construction of $f$, we know $f(\calN_{\calR'}(\cdot))$ has at least the same robust accuracy as $f^*_{adv}$ too. Moreover, $f^*(\calN_{\calR'}(\cdot))$, defined as the classifier with the highest robust accuracy after normalizaiton, should have at least the same robust accuracy as $f(\calN_{\calR'}(\cdot))$. Therefore, $f^*(\calN_{\calR'}(\cdot))$ has at least the same robust accuracy as $f^*_{adv}$. On the other hand, we know $f^*_{adv}$ has the highest possible robust accuracy by definition. Therefore, we can conclude that $f^*(\calN_{\calR'}(\cdot))$ and $f^*_{adv}$ have the same robust accuracy.
\end{proof}

\subsection{Computation Efficiency of Normalizer for Reversible Relation}
\label{subsec:complexity}

We show in Proposition~\ref{lem:complexity} that normalizing over reversible relation is at most as hard as finding the strongest adversarial example, which is required for optimal adversarial training.
\begin{proposition}
\label{lem:complexity}
Let $(\bfx, y)$ denote a sample point where $\bfx$ is the input vector and $y$ is the ground truth label. Let $\calR'$ denote a reversible relation, and $\calT(\bfx) = \{\bfz|\bfx\to^*_{\calR'}\bfz\}$ denote the feasible set of adversarial examples. Let $\bfz^* = \arg\max_{\bfz\in\calT(\bfx)} \ell(f,\bfz,y)$, i.e. the most powerful adversarial example of $\bfx$ for model $f$, then the normalizer $\calN(\bfx)=\bfz^*$ satisfies the robustness condition in Proposition~\ref{thm:robustcondition}.~\footnote{A mild assumption is that the argmax solver has a consistent tie-breaker in case multiple optima exist.}
\end{proposition}

Let $C_i$ be any connected component in $G_{\calR'}$. Since $\calR'$ is reversible, $C_i$ shall be strongly connected. The strongest adversarial example to an input $\bfx\in C_i$ shall be the same for all nodes in $C_i$. Meanwhile, the strongest adversarial example is deterministic given a consistent tie-breaker between adversarial choices with the same test loss. Therefore, $\calN(\bfx) = \bfz^*$ satisfies the robustness condition. 

\subsection{Semantics-Preserving Transformation and Most Likely Label}
\label{subsec:labels}
In Figure~\ref{fig:tradeoff}, one may notice that two inputs $\bfx$ and $\bfz$ may have different most likely labels even though $\bfz$ is obtained from $\bfx$ by a semantics-preserving transformation. This phenomenon may look bizarre, but is indeed possible in real-world settings. The main reason is that the input vector may contain information that (1) is irrelevant to the essential semantics for classification task, and yet (2) may enhance classification accuracy on clean inputs.

Taking malware detection as an example. A group of zip tools may have the same static feature vector $\bfx$ as some ransomware. Authors of zip tools may agree to use a secret syntactic pattern with no semantic implication in an attempt to distinguish from ransomwares circulated on the web. Let $\bfz$ be the static feature vector of zip tools after the pattern is added. Now given the natural distribution and the absence of relational adversaries, $\bfz$ will certainly be classified as benign zip tools, while $\bfx$ is predominantly ransomware.

However, in the long run, an adaptive malware author will eventually know this secret pattern, and also add it to his ransomware. Similarly, the secret pattern may also be gradually abandoned by authors of zip tools as time goes by making the zip tools identical to the ransomware.
In short, $(\bfx,\bfz)$ is a semantics-preserving transformation whose source $\bfx$ and target $\bfz$ do not have the same most likely label on the clean distribution.
\end{document}